\newtheorem{theorem}{Theorem}
\newtheorem{prop}{Proposition}
\newcommand\scalemath[2]{\scalebox{#1}{\mbox{\ensuremath{\displaystyle #2}}}}
\newcommand{\MyMapTemplatePrefixc}[4]{\expandafter#1\csname#3#4\endcsname{#2{#4}}} 
\newcommand{\MyMapTemplatePrefixtb}[5]{\expandafter#1\csname#4#5\endcsname{#2{#3{#5}}}} 
\newcommand{\MyMapTemplateNoPrefix}[3]{\expandafter#1\csname#3\endcsname{#2{#3}}}
\newcommand\paragraf[1]{\noindent\textbf{#1}}
\def\resp{\emph{resp.}\@\xspace}
\def\etc{\emph{etc.}\@\xspace}
\def\eg{\emph{e.g.}\@\xspace}
\def\ie{\emph{i.e.}\@\xspace}
\def\etal{\emph{et~al.}\@\xspace}
\newcommand{\newcontent}[1]{\textcolor{blue}{#1}}
\begin{document}

\begin{figure}
	\centering
	{\includegraphics[width=1\textwidth]{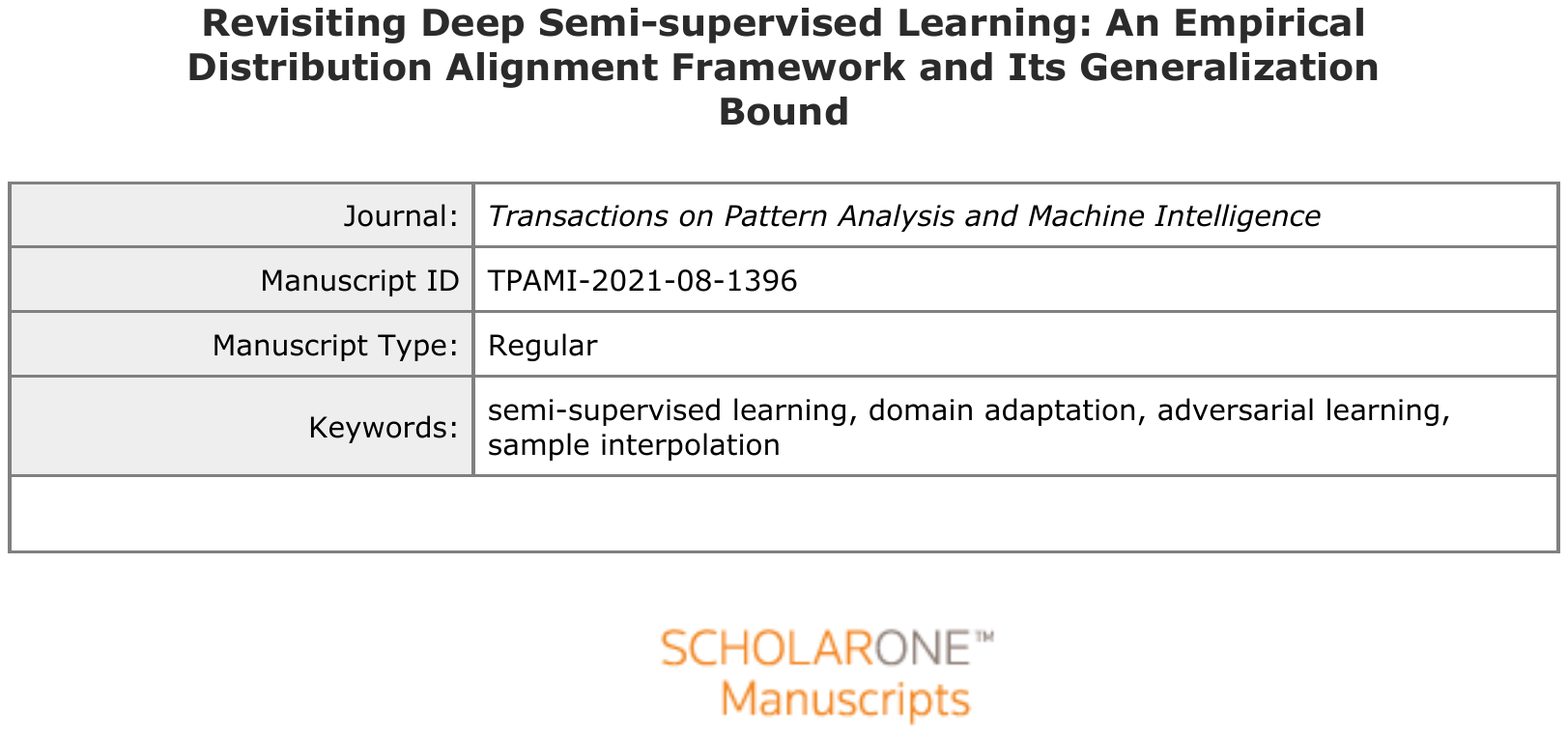}}
\end{figure}

\title{Revisiting Deep Semi-supervised Learning: \\ An Empirical Distribution Alignment Framework and Its Generalization Bound}
\author{Feiyu~Wang,~\IEEEmembership{}
        Qin~Wang,~\IEEEmembership{}
        Wen~Li,~\IEEEmembership{}
        Dong~Xu,~\IEEEmembership{Fellow,~IEEE}
        and~Luc~Van~Gool~\IEEEmembership{}
\IEEEcompsocitemizethanks{
\IEEEcompsocthanksitem Feiyu Wang and Dong Xu are with the School of Electrical and Information Engineering, The University of Sydney, NSW, Australia.\protect\\ E-mail: fwan6592@uni.sydney.edu.au, dong.xu@sydney.edu.au.

\IEEEcompsocthanksitem Qin Wang and Luc Van Gool are with the Department of Information Technology and Electrical Engineering, Swiss Federal Institute of Technology in Z\"urich, Z\"urich, Switzerland.\protect\\ E-mail: qwang@ethz.ch, vangool@vision.ee.ethz.ch.
\IEEEcompsocthanksitem Wen Li is with the School of Computer Science and Engineering, University of Electronic Science and Technology of China, Chengdu, China.\protect\\ E-mail: liwenbnu@gmail.com.
}
\thanks{Manuscript received XXXX XX, XXXX; revised XXXX XX, XXXX.}}

\markboth{IEEE XXXX ON XXXX~XXXX,~Vol.~XX, No.~XX, XXXX~XXXX}%
{Author~1 \MakeLowercase{\textit{et al.}}: Revisiting Deep Semi-supervised Learning: A New Framework and Its Generalization Bound}
\IEEEtitleabstractindextext{%
\begin{abstract}
Deep semi-supervised learning (SSL), which aims to utilize a limited number of labeled samples and abundant unlabeled samples to learn robust deep models, is attracting increasing interest from both machine learning and computer vision researchers. 
In this work, we revisit the semi-supervised learning problem from a new perspective of explicitly reducing empirical distribution mismatch between labeled and unlabeled samples. Benefited from this new perspective, we first propose a new deep semi-supervised learning framework called Semi-supervised Learning by Empirical Distribution Alignment (SLEDA), in which existing technologies from the domain adaptation community can be readily used to address the semi-supervised learning problem through reducing the empirical distribution distance between labeled and unlabeled data. Based on this framework, we also develop a new theoretical generalization bound for the research community to better understand the semi-supervised learning problem, in which we show the generalization error of semi-supervised learning can be effectively bounded by minimizing the training error on labeled data and the empirical distribution distance between labeled and unlabeled data. Building upon our new framework and the theoretical bound, we develop a simple and effective deep semi-supervised learning method called Augmented Distribution Alignment Network (ADA-Net) by simultaneously adopting the well-established adversarial training strategy from the domain adaptation community and a simple sample interpolation strategy for data augmentation. Additionally, we incorporate both strategies in our ADA-Net into two exiting SSL methods to further improve their generalization capability, which indicates that our new framework provides a complementary solution for solving the SSL problem. Our comprehensive experimental results on two benchmark datasets SVHN and CIFAR-10 for the semi-supervised image recognition task and another two benchmark datasets ModelNet40 and ShapeNet55 for the semi-supervised point cloud recognition task demonstrate the effectiveness of our proposed framework for SSL.
\end{abstract}

\begin{IEEEkeywords}
    semi-supervised learning, domain adaptation, sample interpolation.
\end{IEEEkeywords}}

\maketitle

\IEEEdisplaynontitleabstractindextext

\IEEEpeerreviewmaketitle

\IEEEraisesectionheading{\section{Introduction}\label{sec:introduction}}
\IEEEPARstart{S}{emi-supervised} Learning (SSL) methods aim to learn robust models with a limited number of labeled samples and an abundant number of unlabeled samples. As a classical learning paradigm, it has gained much interest from both machine learning and computer vision community. Many approaches have been proposed in recent decades, including label propagation, graph regularization, \etc~\cite{chapelle2003cluster, blum1998combining, belkin2004semi, grandvalet2005semi, blum2001learning, zhu2005semi}. Recently, there is increasing research interest in training deep neural networks by using the semi-supervised learning methods \cite{tarvainen2017mean, laine2017temporal, miyato2018virtual, odena2018realistic, cicek2018saas, Chen_2018_ECCV}. This is partially due to the requirement of a large amount of labeled training data for training robust deep models, which unfortunately involves heavy data annotation costs.

\begin{figure}
	\centering
	{\includegraphics[width=\columnwidth]{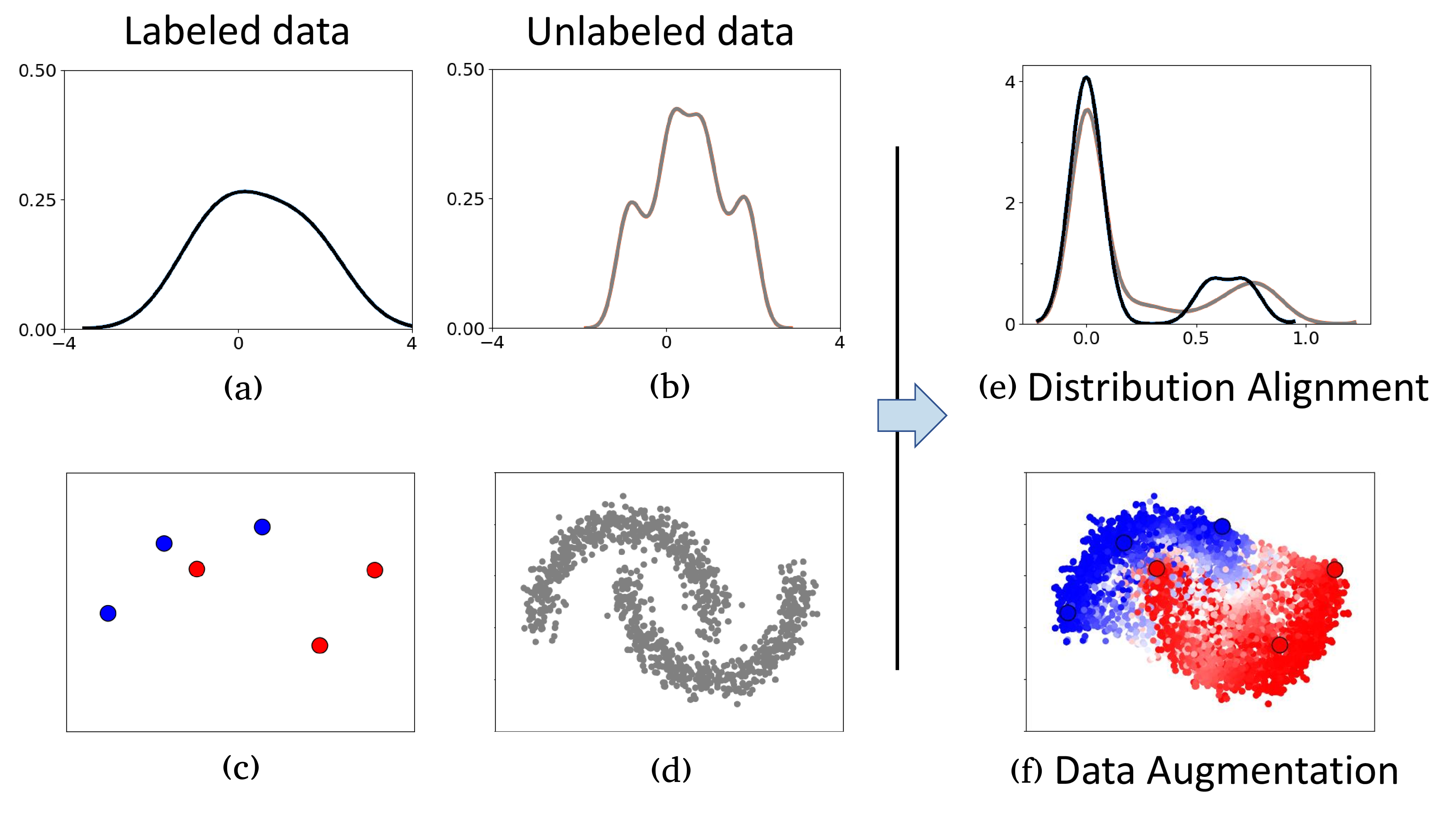}}
	\caption{Illustration of empirical distribution mismatch between the labeled and unlabeled samples from the two-moon data. The labeled and unlabeled samples are shown in (c) and (d), and the kernel density estimation results along their x-axis projection are plotted in (a) and (b), respectively. Our approach aims to reduce the empirical distribution mismatch by aligning the sample distributions in the latent space (e) and augmenting the training samples by using interpolation between labeled and unlabeled data (f).}
	\label{fig:distmis}
	\vspace{-5pt}
\end{figure}

While many methods have been proposed to extend the traditional supervised deep learning methods to semi-supervised learning scenarios, we still lack theoretical analysis on how these methods are able to improve the learning performance, especially for the deep learning based semi-supervised learning approaches. To this end, we propose a new deep semi-supervised learning framework called Semi-supervised Learning by Empirical Distribution Alignment (SLEDA), in which we address the semi-supervised learning problem from the novel perspective of empirical distribution mismatch between labeled and unlabeled samples. On one hand, the proposed framework provides us with a novel perspective to conduct a theoretical analysis of semi-supervised learning based on the theory of distribution divergence minimization. We show that the generalization error of semi-supervised learning can be effectively bounded by minimizing the training error on labeled data and the empirical distribution distance between labeled and unlabeled data. On the other hand, it also motivates us to develop a new deep semi-supervised learning method. Based on the new framework, we design a simple and effective deep semi-supervised learning approach called Augmented Distribution Alignment that combines an adversarial training strategy and a sample interpolation strategy.

In particular, we start from the essential sampling bias issue in SSL, \ie, \textbf{\emph{the empirical distribution of labeled data often deviates from the true sample distribution}}, due to the limited sampling size of labeled data.
We illustrate this issue with the classical two-moon data in Fig.~\ref{fig:distmis}, in which we plot $6$ labeled samples in Fig.~\ref{fig:distmis}(c) and $1,000$ unlabeled samples in Fig.~\ref{fig:distmis}(d). It can be observed that the two-moon structure is captured well by the unlabeled samples. However, due to the randomness of sampling and the small sample size issue, the underlying distribution can hardly be retrieved from the labeled data, though it is also sampled from the same two-moon distribution. In terms of the empirical distribution, this deviation also leads to a considerable distribution mismatch between labeled and unlabeled data, as shown by the density estimation results along their x-axis projection (see Fig.~\ref{fig:distmis}(a) and Fig.~\ref{fig:distmis}(b)). For SSL, a similar empirical distribution mismatch is also observed in real-world datasets (see Section~\ref{sec:ea}). As shown in the recent domain adaptation works \cite{ganin2014unsupervised,long2015learning,long2019transferable}, the performance of the learnt model is often degraded significantly when it is applied to a testing set with considerable empirical data distribution mismatch. Therefore, the SSL methods could also be affected by the data distribution mismatch issue when propagating the labels from labeled data to unlabeled data.

Based on the above observations, we revisit the semi-supervised learning (SSL) problem and propose the new SLEDA framework for SSL, in which we aim to minimize the classification loss on the labeled samples, and simultaneously reduce the empirical distribution mismatch between the labeled and unlabeled samples. This framework enables us to analyze SSL from the new perspective of distribution divergence minimization. In particular, we prove that the generalization error of SSL methods can be strictly bounded by the empirical error on the labeled samples, the empirical distribution distance between labeled samples and unlabeled samples, and other constant terms. This bound immediately implies that we can tackle the SSL problem based on any supervised learning methods by additionally minimizing the empirical distribution distance between labeled samples and unlabeled samples, for which any existing technologies developed in the domain adaptation community can be readily used.

To this end, based on our framework, we propose a simple and effective SSL method called Augmented Distribution Alignment Network (our method). On one hand, as illustrated in Fig.~\ref{fig:distmis}(e), we first adopt the widely used adversarial training strategy to minimize the distribution mismatch between labeled and unlabeled data, such that the feature distributions are well aligned in the latent space. On the other hand, to address the small sample size issue and further enhance the distribution alignment, we also propose a data augmentation strategy to produce pseudo-labeled samples by generating interpolated samples between the labeled and unlabeled training sets, as illustrated in Fig.~\ref{fig:distmis}(f). In addition, benefiting from the new perspective for solving the semi-supervised learning problem, our newly proposed ADA-Net is also complementary to most existing SSL methods. Specifically, the two newly proposed strategies in our ADA-Net can be readily incorporated into the existing deep SSL methods to futher boost the performance of those state-of-the-art SSL methods.

We conduct comprehensive experiments on two benchmark datasets SVHN and CIFAR10 for semi-supervised image recognition and another two benchmark datasets ModelNet40 and ShapeNet55 for semi-supervised point cloud recognition. Extensive experimental results demonstrate the effectiveness of our proposed framework for both SSL tasks.

A preliminary conference version of this work was published in \cite{wang2019adanet}. This work extends \cite{wang2019adanet} in both theoretical and experimental aspects, which are summarized as follows,
\begin{itemize}
    \item In terms of the theoretical aspect, by analyzing the generalization bound for SSL, we prove the theoretical connection between minimizing the empirical distribution mismatch of labeled and unlabeled samples and optimizing the generalization bound for SSL with the aid of unlabeled samples. This provides a new theoretical basis for our method ADA-Net introduced in the preliminary conference version \cite{wang2019adanet}, and also provides a new perspective to understand the SSL problem..
    \item In terms of the methodology, we also propose two new variants of our ADA-Net by additionally incorporating the adversarial distribution alignment and cross-set sample augmentation strategies into two existing SSL methods to further improve their classification performance.
    \item In terms of the experimental aspect, we conduct additional experiments to validate our ADA-Net method. We extend our ADA-Net method for semi-supervised point cloud recognition, by utilizing a new mixup method specifically designed for point cloud recognition. The latest state-of-the-art SSL approaches are also implemented for comparison. This extends the SSL benchmarks, and helps the research community to validate the SSL approaches in a more comprehensive way.  
\end{itemize}

\section{Related Work}
\subsection{Semi-supervised Learning}
As a classical learning paradigm, various methods have been proposed for semi-supervised learning, including label propagation, graph regularization, co-training, \etc\cite{blum1998combining, mitchell2004role, grandvalet2005semi, blum2001learning, joachims2003transductive}. We refer interested readers to \cite{zhu2005semi} for a comprehensive survey. Recently, there is increasing interest in training deep neural networks for SSL \cite{tarvainen2017mean, laine2017temporal, miyato2018virtual, odena2018realistic, cicek2018saas, Chen_2018_ECCV, rizve2021in}. This is partially due to the data-intensive requirement of the conventional deep learning techniques, which often imposes heavy data annotation demands and brings high human annotation costs. Recently, different deep SSL methods have been proposed. For example, the works in \cite{laine2017temporal, tarvainen2017mean,miyato2018virtual} proposed to add small perturbations on unlabeled data, in order to enforce a consistency regularization~\cite{odena2018realistic} on the outputs of the model. Other works~\cite{Chen_2018_ECCV,cicek2018saas,rizve2021in} adopted the self-training idea and used the propagated labels with a memory module, or the regularization. The ensembling strategies were also used for SSL. For example, the work in \cite{laine2017temporal} used the average prediction result based on the outputs of the network-in-training over time to regularize the model, while the work in \cite{tarvainen2017mean} instead used the accumulated parameters for prediction. Meanwhile, other works \cite{berthelot2019mixmatch,berthelot2020remixmatch,sohn2020fixmatch} used carefully designed image data augmentation schemes during the training process.
On the theoretical side, several previous works have proposed different SSL theories based on the notion of compatibility \cite{balcan2005}, the cluster assumption \cite{rigollet2007}, the manifold assumption \cite{globerson2017}, the causal mechanism \cite{kuegelgen2020causality}, or the self-training strategy \cite{wei2021theoretical}.

In contrast to the above works, we revisit the SSL problem from a new perspective based on the empirical distribution mismatch, which was rarely discussed in the literature for SSL. By simply reducing the data distribution mismatch, we show that our newly proposed augmented distribution alignment strategy in combination with vanilla neural networks performs competitively with the state-of-the-arts SSL methods. Moreover, since we revisit the SSL problem from a new perspective, our approach is potentially complementary to these existing SSL approaches, and can be readily combined with them to further boost their performance. 

\subsection{Data Sampling Bias Problem}
Data Sampling bias was usually discussed in the literature for both supervised learning and domain adaptation~\cite{rosset2005method,dudik2006correcting,huang2007correcting}. Many works have been proposed to measure or reduce the data sampling bias in the training process~\cite{ganin2014unsupervised,long2015learning,long2019transferable}. Recently, in line with the generative adversarial networks~\cite{goodfellow2014generative}, the adversarial training strategy was also used to reduce the empirical data distribution mismatch for domain adaptation~\cite{ganin2014unsupervised,yan2017learning,zhang2018collaborative,zhang2019self, chen2018domain,gong2019dlow,chen2019learning}. Researchers generally assume the data from the two domains are sampled from two different distributions, while in SSL the labeled and unlabeled samples are from the same distribution.
Nonetheless, the domain adaptation techniques for reducing the domain distribution mismatch can still be readily used to address the newly proposed empirical data distribution mismatch issue in SSL. To this end, in this work, we employ the adversarial training strategy in~\cite{ganin2014unsupervised} to reduce the data distribution mismatch between the labeled and unlabeled data. 
As discussed in Section~\ref{sec:ada4ssl}, the small sample size of labeled data may bring further challenge to aligning the data distributions, for which we additionally employ a sample augmentation strategy.

\subsection{Sample Interpolation}
Our work is also related to the recently proposed interpolation-based data augmentation methods for neural network training~\cite{zhang2017mixup,verma2018manifold,chen2020pointmixup}. In particular, the \textit{Mixup} method~\cite{zhang2017mixup} proposed to generate new training samples by using convex combination of pairs of training samples and their labels. The more recent work \cite{chen2020pointmixup} extended the Mixup method to augment 3D point cloud samples, by finding one-to-one correspondence between two point cloud samples before performing interpolation. In order to address the small sample size issue when aligning the distributions, we extend the Mixup method \cite{zhang2017mixup} to SSL by using the pseudo-labels from the unlabeled samples in the interpolation process. Moreover, we also show that by using the interpolation strategies between labeled and unlabeled data, the empirical distribution of the generated samples actually becomes closer to that of the unlabeled samples.

\section{A New SSL Framework}
\label{sec:ssl_framework}
In SSL, we are given a small number of labeled training samples and a large number of unlabeled training samples. Formally, let us denote the set of labeled training data as $\cD_l=\{(\x^l_{1}, y_{1}), \ldots, (\x^l_{n}, y_{n})\}$, where $\x^l_i$ is the $i$-th sample, $y_i$ is its corresponding label, and $n$ is the total number of labeled samples. Similarly, the set of unlabeled training data can be represented as $\cD_{u}=\{\x^{u}_1, \ldots, \x^u_m\}$, where $\x^u_i$ is the $i$-th unlabeled training sample, and  $m$ is the number of unlabeled samples. Usually  $n$ is a small number, and we have $m \gg n$. The task of semi-supervised learning is to train a classifier which performs well on the test data drawn from the same distribution as the training data.

\subsection{Empirical Distribution Mismatch in SSL}\label{sec:empd}
In SSL, the labeled training samples $\cD_l$ and unlabeled training samples $\cD_u$ are assumed to be drawn from an identical distribution. However, due to the limited number of labeled training samples, a considerable empirical distribution difference often exists between the labeled and unlabeled training samples. 

More concretely, we take the two-moon data as an example to illustrate the empirical distribution mismatch problem in Fig.~\ref{fig:distmis}. In particular, the $1,000$ unlabeled samples well describe the underlying distribution (see Fig.~\ref{fig:distmis}(d)), while the labeled samples can hardly represent the two-moon distribution (see Fig.~\ref{fig:distmis}(c)). This can be further verified by projecting their distributions along the x-axis (see Fig.~\ref{fig:distmis}(a) and Fig.~\ref{fig:distmis}(b)), from which we can observe an obvious distribution difference. Actually, when performing multiple rounds of experiments to sample the labeled data, the empirical distributions of labeled data also vary significantly, due to the small sampling number.

This phenomenon was also discussed as the sampling bias problem in the literature~\cite{gretton2012kernel,gretton2009fast,kamath2015learning}. In particular, Gretton \etal~\cite{gretton2012kernel} pointed out that the difference between two sampling experiments measured by the Maximum Mean Discrepancy(MMD) criterion depends on their sampling sizes, while in SSL, the underlying distributions of labeled and unlabeled data are often assumed identical, the MMD between the labeled and unlabeled data tends to become smaller if and only if the numbers of both labeled and unlabeled samples are large, which is also described as follows,
\begin{prop}
\label{th:1}
Let us denote $\mathcal{F}$ as a class of witness functions $f$ : $\x \rightarrow \mathcal{R}$ in the reproducing kernel Hilbert space (RKHS) induced by a non-negative kernel function 
with upper bound $K$, then the MMD between the labeled dataset $\cD_l$ and the unlabeled dataset $\cD_u$ can be bounded by $Pr\{ \text{MMD}[\mathcal{F}, \cD_l, \cD_u] > 2 (\sqrt{(K/n)} + \sqrt{(K/m)} +\epsilon)\} \leq 2\exp\Big\{\frac{-\epsilon^2nm}{2K(n+m)}\Big\}$
,\end{prop}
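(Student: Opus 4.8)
\emph{Proof sketch (proposal).} The statement is the specialization, to the case where the two samples share a common distribution, of the two-sample concentration inequality of Gretton \etal, so the plan is simply to reproduce that argument in the SSL notation. Let $\phi$ denote the canonical feature map of the RKHS, so that $\langle\phi(\x),\phi(\x')\rangle_{\mathcal{H}}=k(\x,\x')$ and $\|\phi(\x)\|_{\mathcal{H}}^2=k(\x,\x)\le K$. By the reproducing property,
\[\text{MMD}[\mathcal{F},\cD_l,\cD_u]=\sup_{\|f\|_{\mathcal{H}}\le1}\Big(\tfrac1n\sum_{i=1}^{n} f(\x^l_i)-\tfrac1m\sum_{j=1}^{m} f(\x^u_j)\Big)=\Big\|\tfrac1n\sum_{i=1}^{n}\phi(\x^l_i)-\tfrac1m\sum_{j=1}^{m}\phi(\x^u_j)\Big\|_{\mathcal{H}}.\]
Since in SSL both $\cD_l$ and $\cD_u$ are i.i.d.\ draws from one common distribution $p$, the population quantity $\text{MMD}[\mathcal{F},p,p]$ equals $0$; hence it suffices to control how far the empirical MMD can lie above $0$, which I would do in two steps: (i) bound its expectation, and (ii) bound its deviation from that expectation.

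For step (i), set $\hat\mu_l=\tfrac1n\sum_i\phi(\x^l_i)$, $\hat\mu_u=\tfrac1m\sum_j\phi(\x^u_j)$, and $\mu=\mathbb{E}_p[\phi(\x)]$. A symmetrization inequality (introduce Rademacher signs $\sigma_i$) gives $\mathbb{E}\|\hat\mu_l-\mu\|_{\mathcal{H}}\le\tfrac2n\mathbb{E}\|\sum_i\sigma_i\phi(\x^l_i)\|_{\mathcal{H}}\le\tfrac2n(\sum_i\mathbb{E}\,k(\x^l_i,\x^l_i))^{1/2}\le 2\sqrt{K/n}$, and likewise $\mathbb{E}\|\hat\mu_u-\mu\|_{\mathcal{H}}\le 2\sqrt{K/m}$; the triangle inequality then yields $\mathbb{E}\,\text{MMD}[\mathcal{F},\cD_l,\cD_u]\le 2(\sqrt{K/n}+\sqrt{K/m})$, which is the offset in the statement. (A direct second-moment computation, using $\mathbb{E}\|\hat\mu_l-\mu\|_{\mathcal{H}}^2=\tfrac1n(\mathbb{E}\,k(\x,\x)-\mathbb{E}\,k(\x,\x'))\le K/n$ together with the vanishing of the cross term $\mathbb{E}\langle\hat\mu_l-\mu,\hat\mu_u-\mu\rangle$ by independence, would give the sharper $\sqrt{K/n}+\sqrt{K/m}$, but the looser constant is all we need.)

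For step (ii), I would apply McDiarmid's bounded-differences inequality to $\text{MMD}[\mathcal{F},\cD_l,\cD_u]$ viewed as a function of the $n+m$ independent inputs. Replacing a single labeled point $\x^l_k$ by any other point changes $\tfrac1n\sum_i f(\x^l_i)$ by at most $\tfrac1n(|f(\x^l_k)|+|f(\tilde{\x}^l_k)|)\le 2\sqrt{K}/n$ \emph{uniformly} over the unit ball of $\mathcal{H}$, because $|f(\x)|=|\langle f,\phi(\x)\rangle|\le\sqrt{K}$; since a supremum of functions changes by at most the supremum of the pointwise changes, the bounded-difference constant is $2\sqrt{K}/n$ for each labeled coordinate and $2\sqrt{K}/m$ for each unlabeled one. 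McDiarmid's inequality then gives $\Pr\{\text{MMD}-\mathbb{E}\,\text{MMD}>\epsilon\}\le\exp\!\big(-2\epsilon^2/(n(2\sqrt{K}/n)^2+m(2\sqrt{K}/m)^2)\big)=\exp\!\big(-\epsilon^2 nm/(2K(n+m))\big)$. Combining (i) and (ii), and recording the matching two-sided bound (which is the source of the leading factor $2$), yields the claimed inequality.

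The only genuinely nontrivial point is the expectation bound of step (i): one has to notice that the \emph{biased} empirical MMD does not average to the population value $0$, and to quantify that its bias is of order $n^{-1/2}+m^{-1/2}$. This is precisely the sampling-bias effect the proposition is meant to make explicit, and it is why the bound carries the additive $\sqrt{K/n}+\sqrt{K/m}$ offset rather than a purely vanishing exponential tail; it is the step I would be most careful about. The RKHS reformulation of the MMD and the bounded-differences bookkeeping in step (ii) are routine.
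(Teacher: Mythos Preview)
Your proposal is correct and follows essentially the same route as the paper: the paper's proof simply invokes Theorem~7 of Gretton \etal\ \cite{gretton2012kernel} and specializes it to the case $p=q$, and what you have written is precisely the standard proof of that theorem (RKHS reformulation, symmetrization/Rademacher bound on the expectation, then McDiarmid for the deviation). In other words, you have unpacked the citation rather than taken a different path; the only cosmetic remark is that the leading factor $2$ in Gretton \etal's statement already comes from their one-sided formulation, not from passing to a two-sided bound, but this does not affect the argument.
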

\begin{proof}
The proof can be readily derived with Theorem 7 in \cite{gretton2012kernel} by assuming the two distributions $p$ and $q$ are identical.
\end{proof}

\begin{figure}
	\centering
	{\includegraphics[width=0.85\linewidth]{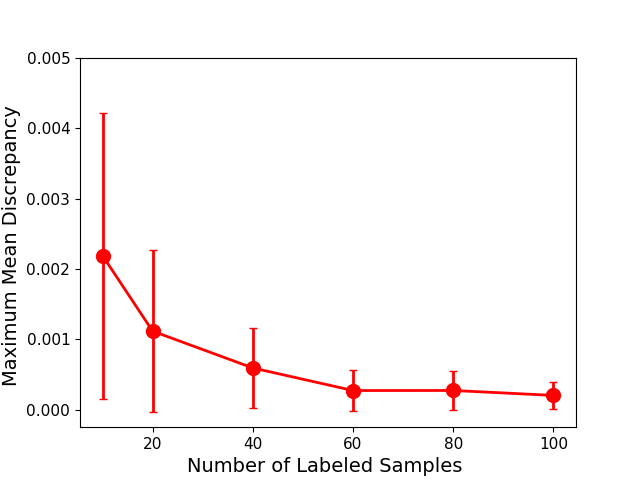}}
	\caption{MMD between the labeled and unlabeled samples in the two-moon example with varying number of labeled samples. The number of unlabeled sample is fixed as $1,000$.}
	\label{fig:twomoonsample}
\vspace{-10pt}	
\end{figure}

In SSL, the number of labeled samples $n$ is usually small, leading to a notable empirical distribution mismatch with the unlabeled samples. Specifically, we take the sampling bias problem with the two-moon data in the semi-supervised learning scenario as an example for futher illustration. In Fig.~\ref{fig:twomoonsample}, we plot the MMDs between the labeled and unlabeled samples when using different numbers of labeled samples. Fig.~\ref{fig:twomoonsample} shows that, when the number of labeled data is small, both the mean and variance of MMDs between the labeled and unlabeled samples are large, and the MMDs tend to be smaller only when $n$ becomes sufficiently large. 

The above observation implies that in SSL, the small sample size often causes empirical distribution mismatch between the labeled data and the true sample distribution. Consequentially, a model trained from the empirical distribution of the labeled data is unlikely to generalize well to the test data. While various strategies have been exploited for utilizing the unlabeled data in the conventional SSL methods~\cite{cicek2018saas, Chen_2018_ECCV}, the empirical distribution mismatch issue was rarely discussed in the SSL literature. In this work, we emphasize that it is one of the hidden factors that affects the performance of conventional SSL methods. This was also verified by the recent work \cite{odena2018realistic}, which shows that the performance of SSL methods degrades when the number of labeled samples is limited. 

\subsection{The SLEDA Framework}\label{sec:sleda}
Based on the above analysis, we propose the new Semi-supervised Learning by Empirical Distribution Alignment (SLEDA) framework for SSL, to address the empirical distribution mismatch issue between the labeled and unlabeled data. In addition to training the classifier based on supervision information from labeled data, we also simultaneously minimize the distribution divergence between labeled and unlabeled data, such that the empirical distributions of labeled and unlabeled samples are well aligned in the latent space, which is illustrated in Fig.~\ref{fig:distmis}(e).

Formally, let us denote the loss function as $\ell(f(\x^l_i), y_i)$ where $f$ is the classifier to be learnt. We also define $\Omega(\cD_l, \cD_u)$ as the distribution divergence between labeled and unlabeled data measured by a pre-defined metric. Then, the objective function of our framework can be formulated as follows,
\begin{equation}
\min_{f, \Omega} \sum_{i=1}^n\ell(f(\x^l_i), y_i) + \gamma\Omega(\cD_l, \cD_u), \label{eq:ada_obj}
\end{equation}
where $\gamma$ is the trade-off parameter to balance the two terms. 

\noindent\textbf{Discussion:} 
The above SLEDA framework is very simple and generic, and we do not make any strong assumption on the terms in \eqref{eq:ada_obj}. This means that we can readily incorporate any supervised learning methods (\eg deep supervised learning methods) into the SLEDA framework. In Section~\ref{sec:ada4ssl}, we take the vanilla neural networks as an example to develop an ADA-Net method for SSL.

One potential issue in our newly proposed formulation is that due to the small number of labeled samples (\ie $n$), the optimization process for solving \eqref{eq:ada_obj} may become unstable. To address this issue, we further propose a data augmentation strategy. Inspired by the recent mixup approach \cite{zhang2017mixup} for deep supervised learning methods, we iteratively generate new training samples by using the interpolation strategy between the labeled samples and unlabeled samples, and then use the interpolated samples to learn the classifier and simultatneously reduce the empirical distribution divergence. The details are introduced in Section~\ref{sec:ada4ssl}.

\subsection{Generalization Error Bound}
Our SLEDA framework provides us a new perspective to understand the SSL problem. In this subsection, we provide rigorous theoretical analysis to show that the generalization error of SSL can be effectively bounded by the training error on labeled samples and the empirical distribution divergence between labeled and unlabeled samples. 

Let \(P\) be the unknown data distribution, from which the labeled and unlabeled training datasets \(\mathcal D_l\) and \(\mathcal D_u\) are respectively sampled. Denote the sizes of the training datasets \(\mathcal D_l\) and \(\mathcal D_u\) as \(n\) and \(m\), respectively. Let $\cH$ be a hypothesis class, and $h \in \cH$ be a hypothesis, and we also denote its empirical errors on \(\mathcal D_l\) and \(\mathcal D_u\) as \(\hat{\mathcal E}_l(h)\) and \(\hat{\mathcal E}_u(h)\), respectively. To derive the generalization error bound, we use the \emph{symmetric difference hypothesis space} $\cH\Delta\cH$ for any hypothesis space $\cH$~\cite{ben2010theory}. Following \cite{ben2010theory,kifer2004}, we represent the empirical \(\cH\Delta\cH\)-divergence  between \(\mathcal D_l\) and \(\mathcal D_u\) as \(\hat d_{\mathcal H \Delta \cH} \big( \mathcal D_l, \mathcal D_u \big) \).

\begin{theorem}
For any hypothesis space \(\mathcal H\), let us denote the labeled training sample set as \(\mathcal D_l\) and the unlabeled training sample set as \(\mathcal D_u\), where $\cD_l$ and $\cD_u$ are drawn from the same distribution $P$, and let $h \in \cH$ be a hypothesis. We further define $\hat{\mathcal E}_l(h)$ as the empirical error of $h$ on the labeled training samples, and ${\mathcal E}(h)$ as the generalization error on the unseen testing data drawn from the distribution $P$, then the generalization bound of $h$ can be described by the following inequality with probability at least \(1 - \delta\),
\begin{align}
\mathcal E(h) \leq \hat{\mathcal E}_l(h) + 
    \scalemath{0.9}{\frac 1 2 \hat d_{\mathcal H\Delta \mathcal H}\big(\mathcal D_l, \mathcal D_u\big)
		+ \sqrt{\frac 1{2m}\ln\frac 2\delta}}
\end{align}
where $\hat d_{\mathcal H\Delta \mathcal H}\big(\mathcal D_l, \mathcal D_u\big)$ is the empirical $\cH\Delta\cH$-divergence between $\cD_l$ and $\cD_u$.
\end{theorem}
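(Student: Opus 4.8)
The plan is to read this bound as the specialization of the Ben-David \etal\ domain adaptation inequality~\cite{ben2010theory,kifer2004} to the special case in which the ``source'' and ``target'' marginals coincide: the labeled set $\mathcal D_l$ plays the role of the source sample, the unlabeled set $\mathcal D_u$ plays the role of the target sample, and the unseen test distribution is exactly the common distribution $P$ from which $\mathcal D_u$ is drawn. Concretely, I would chain three inequalities: (i) a concentration step that replaces the true risk $\mathcal E(h)$ by the empirical risk $\hat{\mathcal E}_u(h)$ on the $m$ unlabeled samples; (ii) a triangle-type step that replaces $\hat{\mathcal E}_u(h)$ by $\hat{\mathcal E}_l(h)$ at the cost of a cross-sample error gap; and (iii) a step that bounds this gap by the empirical $\mathcal H\Delta\mathcal H$-divergence $\hat d_{\mathcal H\Delta\mathcal H}(\mathcal D_l,\mathcal D_u)$.

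For step (i), note that $h$ is fixed before $\mathcal D_u$ is drawn and $\mathcal D_u$ consists of $m$ i.i.d.\ draws from $P$, so $\hat{\mathcal E}_u(h)$ is an average of $m$ independent $[0,1]$-valued random variables with mean $\mathcal E(h)$. A two-sided Hoeffding inequality then gives $\mathcal E(h)\le\hat{\mathcal E}_u(h)+\sqrt{\tfrac1{2m}\ln\tfrac2\delta}$ with probability at least $1-\delta$, the factor $2$ inside the logarithm coming from the two-sided tail. Crucially, no uniform convergence over $\mathcal H$ is needed here, which is why the sample size appearing in this term is $m$ rather than the small $n$.

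For steps (ii) and (iii), I would write both empirical errors as disagreements with the labeling function $f$, i.e.\ $\hat{\mathcal E}_l(h)=\hat\epsilon_{\mathcal D_l}(h,f)$ and $\hat{\mathcal E}_u(h)=\hat\epsilon_{\mathcal D_u}(h,f)$ where $\hat\epsilon_{D}(h_1,h_2)=\Pr_{\x\sim D}[h_1(\x)\ne h_2(\x)]$, and then use the identity $\tfrac12\hat d_{\mathcal H\Delta\mathcal H}(\mathcal D_l,\mathcal D_u)=\sup_{h_1,h_2\in\mathcal H}\big|\hat\epsilon_{\mathcal D_l}(h_1,h_2)-\hat\epsilon_{\mathcal D_u}(h_1,h_2)\big|$, which follows by unfolding the definition of $\mathcal H\Delta\mathcal H$. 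Restricting the supremum to $h_1=h$, $h_2=f$ yields $\hat{\mathcal E}_u(h)\le\hat{\mathcal E}_l(h)+\tfrac12\hat d_{\mathcal H\Delta\mathcal H}(\mathcal D_l,\mathcal D_u)$; combining this with step (i) proves the stated inequality.

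The delicate point — and the place where equality of the two marginals is genuinely used — is step (iii). In the general agnostic domain adaptation bound, substituting $f$ into the supremum over $\mathcal H\times\mathcal H$ is only legitimate up to an additive combined error $\lambda$ of the ideal joint hypothesis. To handle this I would either invoke the realizable assumption $f\in\mathcal H$ (making the restriction above exact), or run the standard ideal-hypothesis/triangle argument of~\cite{ben2010theory} entirely at the level of the two finite samples and observe that, because $\mathcal D_l$ and $\mathcal D_u$ share the same distribution $P$, the corresponding population divergence is $d_{\mathcal H\Delta\mathcal H}(P,P)=0$, so only the empirical divergence $\hat d_{\mathcal H\Delta\mathcal H}(\mathcal D_l,\mathcal D_u)$ survives, together with a cross-sample fluctuation of the ideal hypothesis that is itself dominated by that empirical divergence. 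Everything else is routine bookkeeping.
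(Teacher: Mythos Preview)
Your three-step plan (Hoeffding on the unlabeled side, triangle to pass from $\hat{\mathcal E}_u(h)$ to $\hat{\mathcal E}_l(h)$, then dominate the gap by $\tfrac12\hat d_{\mathcal H\Delta\mathcal H}$) is exactly what the paper does, in the same order and with the same constants. The paper writes step~(iii) as $|\hat{\mathcal E}_l(h)-\hat{\mathcal E}_u(h)|\le\sup_{h\in\mathcal H}|\hat{\mathcal E}_l(h)-\hat{\mathcal E}_u(h)|=\tfrac12\hat d_{\mathcal H\Delta\mathcal H}(\mathcal D_l,\mathcal D_u)$, citing the Ben-David definition; so it is doing precisely your ``restrict the supremum to $h_1=h$, $h_2=f$'' move, just without flagging the realizability caveat you raise.

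One remark on the caveat: your first fix (assume $f\in\mathcal H$) is the clean one and is what the paper's equality implicitly uses. Your second proposed fix, however, does not quite close the gap as you describe it. Running the ideal-joint-hypothesis argument at the empirical level still leaves the additive term $\lambda=\hat{\mathcal E}_l(h^\ast)+\hat{\mathcal E}_u(h^\ast)$ for the best $h^\ast\in\mathcal H$; the fact that $d_{\mathcal H\Delta\mathcal H}(P,P)=0$ does not make $\lambda$ vanish, and the ``cross-sample fluctuation of the ideal hypothesis'' is not in general dominated by $\hat d_{\mathcal H\Delta\mathcal H}(\mathcal D_l,\mathcal D_u)$. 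So if you want to avoid the realizability assumption you would end up with an extra approximation-error term that the stated theorem does not carry.
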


Theorem 1 reveals that the generalization error of SSL  can be bounded by the training error on a small number of labeled training samples, the empirical $\cH\Delta\cH$-divergence between $\cD_l$ and $\cD_u$, and a minor term  \emph{w.r.t.} the number of unlabeled samples $m$. Note that in the generalization bound for supervised learning, the minor term is usually related to the number of labeled training samples $n$. When $n$ is small (\ie the case in SSL), the generalization error of the supervised learning methods cannot be effectively bounded by the training error, as the minor term tends to be large. In the generalization bound shown in Theorem 1, we can control the minor term $\sqrt{\frac 1{2m}\ln\frac 2\delta}$ related to the number of unlabeled samples $m$, as $m$ is often much larger than $n$, so the minor term would be small. The generalization bound can also become tighter by reducing the empirical $\cH\Delta\cH$-divergence $\hat d_{\mathcal H\Delta \mathcal H}\big(\mathcal D_l, \mathcal D_u\big)$, which can be achieved with the adversarial distribution alignment strategy as analyzed in \cite{ganin2016domain}. Below we prove Theorem 1.

\begin{proof}
Given an arbitrary hypothesis \(h\), based on the Hoeffding's inequality (see page 58 in \cite{abu-mostafa2012learning}), we arrive at:
\begin{align}
\scalemath{0.9}{Pr\Bigg(\big| \mathcal E(h) - \hat{\mathcal E}_l(h) \big| 
	\leq \sqrt{\frac 1{2n}\ln\frac 2 \delta} \Bigg)
	\geq 1 - \delta
	} \label{eq:hoeffding1_ver2}\\
\scalemath{0.9}{Pr\Bigg(\big| \mathcal E(h) - \hat{\mathcal E}_u(h) \big| 
	\leq \sqrt{\frac 1{2m}\ln\frac 2 \delta} \Bigg)
	\geq 1 - \delta
	}\label{eq:hoeffding2_ver2}
\end{align}
where $Pr(\cdot)$ denotes the probability that the event happens. 

While in practice the term \(\hat{\mathcal E}_u(h)\) related to the unlabeled dataset in \eqref{eq:hoeffding2_ver2} cannot be computed, we still use it as an intermediate term for deriving the generalization bound. In particular, to connect \eqref{eq:hoeffding1_ver2} and \eqref{eq:hoeffding2_ver2}, we have,
\begin{align}\label{eq:triangle_supremum_ver2}
\scalemath{0.9}{\big| \mathcal E(h) - \hat{\mathcal E}_l(h) \big|}
	&\scalemath{0.9}{=\big| \mathcal E(h) - \hat{\mathcal E}_u(h) 
	+ \hat{\mathcal E}_u(h) - \hat{\mathcal E}_l(h) \big|}
	\nonumber\\
    &\scalemath{0.9}{\leq\big|\mathcal E(h) - \hat{\mathcal E}_u(h) \big|
	+ \big|\hat{\mathcal E}_l(h) - \hat{\mathcal E}_u(h) \big|}
\end{align}

Let us denote $\big| \mathcal E(h) - \hat{\mathcal E}_u(h) \big| 
	\leq \sqrt{\frac 1{2m}\ln\frac 2 \delta}$ as event A, and $\big|\mathcal E(h) - \hat{\mathcal E}_l(h) \leq \big| \hat{\mathcal E}_l(h) - \hat{\mathcal E}_u(h) \big| + \sqrt{\frac 1{2m}\ln\frac 2 \delta}$ as event B.
From \eqref{eq:triangle_supremum_ver2}, we know that if event
 A happens, then we have  $\big|\mathcal E(h) - \hat{\mathcal E}_l(h) \leq \big|\mathcal E(h) - \hat{\mathcal E}_u(h) \big|
	+ \big|\hat{\mathcal E}_l(h) - \hat{\mathcal E}_u(h) \big| \leq \big| \hat{\mathcal E}_l(h) - \hat{\mathcal E}_u(h) \big| + \sqrt{\frac 1{2m}\ln\frac 2 \delta}$, which indicates event B definitely happens. So the probability that event B happens would be no less than the probability that event A happens, \ie,

\begin{align}\label{eq:bound_u_ver2}
&\scalemath{0.9}{Pr\Bigg(\big|\mathcal E(h) - \hat{\mathcal E}_l(h)\big| \leq
	\big|\hat{\mathcal E}_l(h) - \hat{\mathcal E}_u(h)\big|
	    + \sqrt{\frac 1{2m} \ln\frac 2 \delta} \Bigg)}\nonumber\\
\scalemath{0.9}{\geq}& \scalemath{0.9}{Pr\Bigg(\big| \mathcal E(h) - \hat{\mathcal E}_u(h) \big| 
	\leq \sqrt{\frac 1{2m} \ln\frac 2 \delta} \Bigg)}
	\nonumber\\
\scalemath{0.9}{\geq} & \scalemath{0.9}{1 - \delta}
\end{align}

Next, we will bound the term \(\big|\mathcal E(h) - \hat{\mathcal E}_l(h)\big|\) in \eqref{eq:bound_u_ver2} with the empirical \(\mathcal H \Delta \cH\)-divergence.

\begin{align}\label{eq:bound_lemma2variant2}
\scalemath{0.9}
{\big|\hat{\mathcal E}_l(h) - \hat{\mathcal E}_u(h)\big|
    \leq \sup_{h\in\mathcal H}\big|\hat{\mathcal E}_l(h) - \hat{\mathcal E}_u(h)\big|
    = \frac 1 2 \hat d_{\mathcal H \Delta \mathcal H}(\mathcal D_l, \mathcal D_u)
}
\end{align}
Note the second equality in \eqref{eq:bound_lemma2variant2} holds due to the definition of \(\hat d_{\mathcal H\Delta\mathcal H}(\mathcal D_l, \mathcal D_u)\) \cite{ben-david2006analysis,ben2010theory}.

Finally, we can prove the generalization bound in Theorem 1 by substituting the inequality in \eqref{eq:bound_lemma2variant2} into the inequality in \eqref{eq:bound_u_ver2}. Here we complete the proof.
\end{proof}

When $n\ll m$, the minor term $\sqrt{\frac 1{2m} \ln\frac 2 \delta}$ would be much smaller than  $\sqrt{\frac 1{2n} \ln\frac 2 \delta}$ in \eqref{eq:hoeffding1_ver2}. Therefore, for the SSL task, it is likely that we can achieve a tighter bound than \eqref{eq:hoeffding1_ver2} by minimizing the empirical $\cH\Delta\cH$-divergence.

\begin{figure*} [h!]
	\centering
	\includegraphics[width=0.9\linewidth]{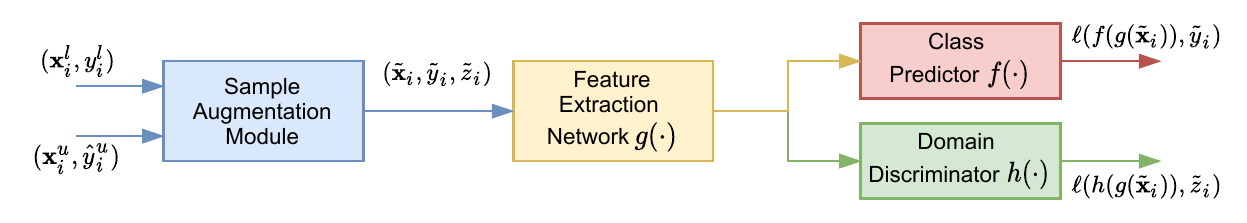}
	\vspace{-10pt}
	\caption{The network architecture of our proposed ADA-Net, in which we introduce an additional discriminator branch $h(\cdot)$ with a gradient reverse layer (GRL) to the vanilla neural network. $\mathbf x^l$ and $y^l$ denote the labeled sample and its class label, $\mathbf x^u$ and $\hat y^u$ denote the unlabeled sample and its pseudo-label produced by the class predictor, respectively, $\tilde{\x}$, $\tilde y$, and $\tilde z$ denote the augmented sample, its class label produced by the class predictor, and its domain label (\ie, 0 for the labeled domain and 1 for the unlabeled domain), respectively, and $\ell(\cdot,\cdot)$ denotes the cross-entropy loss function. During the training process, the cross-set sample interpolation strategy is used between labeled and unlabeled samples, and we feed the interpolated samples into the network. Pseudo-labels of unlabeled samples are obtained by using the prediction scores of the class predictor trained in the last iteration (see Section~\ref{sec:summary} for more details).}
	\label{damain}
\end{figure*}

\section{Augmented Distribution Alignment Network (ADA-Net) for SSL}\label{sec:ada4ssl}
Based on our SLEDA framework, in this section, we develop a new deep SSL method called \emph{Augmented Distribution Alignment Network (ADA-Net)}. In our ADA-Net, we use two strategies \emph{adversarial distribution alignment} and \emph{cross-set sample augmentation}, to respectively tackle the empirical distribution mismatch and small sample size issues. 

\subsection{Adversarial Distribution Alignment}
\label{sec:ada}
We first discuss how to minimize the empirical distribution mismatch between the labeled and unlabeled samples. As shown in~\cite{ganin2014unsupervised,ganin2016domain}, under the mild assumptions, minimizing the $\cH\Delta\cH$-divergence can be implemented by using an adversarial training strategy, which is equivalent to minimizing the empirical $\cH$-divergence $\hat d_{\cH}(\cdot,\cdot)$. Therefore, we employ $\hat d_{\cH}(\cdot,\cdot)$ to measure the distribution divergence $\Omega(\cdot, \cdot)$ in Eq.~\eqref{eq:ada_obj} for aligning the empirical distributions between labeled and unlabeled data.

In particular, let us denote $g(\cdot)$ as a feature extractor (\eg, the convolutional layers), which maps sample $\x$ into a latent feature space. Moreover, let $h:g(\x)\rightarrow\{0,1\}$ be a binary discriminator which outputs 0 for the labeled samples and 1 for the unlabeled samples, and denote the class of binary discriminators as \newcontent{$\mathcal H$}. By Lemma 2 in \cite{ben2010theory}, the empirical $\cH$-Divergence between the labeled and unlabeled samples can be written as:
\begin{eqnarray}
\hat d_{\mathcal H}(\cD_l, \cD_u) \!\!= \!\!2 \Big\{1 - \min_{h\in\mathcal H}\left[err(h, g, \cD_l)+err(h, g, \cD_u)\right]\Big\}
\end{eqnarray}
where $err(h, g, \cD_l) = \frac{1}{n}\sum_{\x^l}[h(g(\x^l))\neq 0]$ is the prediction error of the discriminator $h$ on the labeled samples, and $err(h, g, \cD_u)$ is similarly defined for the unlabeled samples. 

Intuitively, when the empirical distribution mismatch is large, the discriminator can easily distinguish between labeled and unlabeled samples. Thus, its prediction errors would be smaller, and the $\cH$-divergence is higher, and vice versa. Therefore, to reduce the empirical distribution mismatch between labeled and unlabeled samples, we minimize the distribution distance $\hat d_{\cH}(\cD_l, \cD_u)$ to enforce the feature extractor $g$ to generate a good latent space, in which the two sets of features are well aligned. To achieve this goal, we solve the following optimization problem:
\begin{eqnarray}
\min_{g}{\hat d_{\cH}}(\cD_l, \cD_u) \!\!= \!\!\max_g\min_{h\in\mathcal H}\left[err(h, g, \cD_l)+err(h, g, \cD_u)\right] \!\! \!\!
\end{eqnarray}

The above max-min problem can be optimized with the adversarial training methods. In~\cite{ganin2016domain}, Ganin and Lempitsky showed that it can be implemented as a simple gradient reverse layer (GRL), which automatically reverses the gradient the domain discriminator before back-propagating the gradient to the feature extraction block. With the newly introduced GRL, we can directly minimize the classification loss from the domain discriminator $h$ with the standard optimization library. 

\subsection{Cross-Set Sample Augmentation}\label{sec:sample_augmentation}
As mentioned in Section~\ref{sec:ssl_framework}, in SSL, the optimization process may become unstable due to the limited sampling size of labeled data, which eventually degrades the classification performance. In order to assist the alignment process, and as inspired by \cite{zhang2017mixup}, we propose to generate new training samples by using the interpolation strategy between labeled and unlabeled samples (\ie the cross-set sample interpolation strategy). 
In particular, for each $\x^u_i$, we assign a pseudo-label $\hat{y}^u_i$, which is generated by using the prediction score from the model trained in the previous iteration. Then, given a labeled sample $\x^l_i$ and an unlabeled sample $\x^u_i$, the interpolated sample can be represented as,
\begin{eqnarray}
\tilde{\x}_i &=& \lambda_i \x^l_i + (1-\lambda_i) \varphi_i(\x^u_i), \label{eqn:interp_x}\\
\tilde{y}_i &=& \lambda_i y^l_i + (1-\lambda_i) \hat{y}^u_i,\label{eqn:interp_y}\\
\tilde{z}_i &=& \lambda_i \cdot 0 + (1-\lambda_i)\cdot1\label{eqn:interp_z},
\end{eqnarray}
where $\lambda_i$ is a random variable that is generated from a prior $\beta$ distribution, \ie $ \lambda_i \sim \beta(\alpha, \alpha)$ with $\alpha$ being a hyper-parameter to control the shape of the $\beta$ distribution, $\varphi_i$ is the input transformation function for $\x^u_i$, $\tilde{\x}_i$ is the interpolated sample, $\tilde{y}_i$ is its class label, and $\tilde{z}_i$ is its label for training the discriminator. We define $\varphi_i$ in different ways for the image recognition task and the point cloud recognition task as follows. 

\paragraf{The Choice of \(\varphi_i\):} For image recognition, \(\varphi_i\) is simply the identity operator whose output is exactly the same as the input. For point cloud recognition, we cannot use the identity operator, since the 3D points are unordered and there is no one-to-one correspondence between the points of two point clouds. To establish the one-to-one correspondence between two point clouds, we adopt the method proposed in \cite{chen2020pointmixup}, which finds the optimal matching between the points of two point clouds by solving the assignment problem defined as the Earth Mover's Distance (EMD). Thus, for point cloud recognition, we define $\varphi_i$ in \eqref{eqn:interp_x} as the permutation operator, which reorders the points of $\x^u_i$ based on the one-to-one correspondence between the points of $\x^l_i$ and $\x^u_i$. In our implementation, we solve the assignment problem efficiently by using the Auction Algorithm~\cite{bertsekas1992auction}.

The advantages of such cross-set sample augmentation strategy are two-fold. Firstly, the interpolated samples greatly enlarge the training data set, making the learning process more stable, especially for learning the deep models. It was also shown in \cite{zhang2017mixup} that such data augmentation strategy is helpful for improving model robustness. 

Secondly, each pseudo-sample is generated by using the interpolation strategy between a pair of labeled and unlabeled samples, thus the distribution of pseudo-samples is expected to be closer to the real distribution than that of the original labeled training samples. This advantage was theoretically shown in our previous work \cite{wang2019adanet} by employing the Euclidean generalized energy distance \cite{szekely2013energy}, and it implies that the generated pseudo-labeled samples can be deemed as being sampled from the intermediate distributions between the two empirical distributions of the labeled and unlabeled data. As shown in the previous domain adaptation works~\cite{gopalan2011domain,gong2012geodesic}, it is beneficial to use such intermediate distributions to reduce the gap between the two distributions, and learn more robust models.

\subsection{Our Basic ADA-Net Method based on Vanilla Networks}\label{sec:summary}
We unify the adversarial distribution alignment and cross-set sample augmentation strategies into one framework, which finally leads to our augmented distribution alignment approach. 

In Figure~\ref{damain}, we show an example of how to incorporate our augmented distribution alignment approach into a vanilla convolutional neural network, referred to as \emph{ADA-Net}. Specifically, in addition to the classification branch, we add several fully-connected layers as the domain discriminator $h$ to distinguish the labeled domain and the unlabeled domain (see Section~\ref{sec:ada} for more details). Then, for each mini-batch, we use the cross-set sample augmentation strategy (see \eqref{eqn:interp_x}, \eqref{eqn:interp_y}, \eqref{eqn:interp_z}) to generate the interpolated samples and labels, and use them as the training data to learn our ADA-Net. Let us denote the parameters for the feature extractor $g$, the class predictor $f$, and the domain discriminator $h$ as $\bm{\theta}_g$, $\bm{\theta}_f$, and $\bm{\theta}_h$, respectively. 
The objective function for training the network can be formulated below by replacing the training samples
and the term $\Omega(\cdot, \cdot)$ in (\ref{eq:ada_obj}), \ie,
\begin{align} \label{eq:ada_obj_final_0}
\max_{\bm{\theta}_h}\min_{\bm{\theta}_f, \bm{\theta}_g} \sum_i
&\big\{\lambda_i\ell(f(g(\tilde{\x}_i; \bm{\theta}_g); \bm{\theta}_f), \tilde{y}_i) \nonumber\\
&- \gamma\ell(h(g(\tilde{\x}_i; \bm{\theta}_g); \bm{\theta}_h), \tilde{z}_i)\big\}
\end{align}
where $\ell(\cdot, \cdot)$ is the loss function and we use the cross-entropy loss in this work,  and $\lambda_i$ is the weight for balancing the classification losses which corresponds to the value $\lambda_i$ for generating the interpolated sample $\tilde{\x}_i$ (see (\ref{eqn:interp_x})). The explanation for introducing this weight is as follows. The higher the value $\lambda_i$ is, the higher the proportion of $\tilde{\x}_i$ coming from the labeled set is, and the more confident we are about its label $\tilde{y}_i$, and vice versa.

To implement the adversarial training strategy, we add a gradient reverse layer \cite{ganin2016domain} before the domain discriminator, which automatically reverses the sign of the gradient from the domain discriminator during back-propagation. Then we can rewrite the objective function in \eqref{eq:ada_obj_final_0} as follows,
\begin{align} \label{eq:ada_obj_final}
\min_{\bm{\theta}_f, \bm{\theta}_g, \bm{\theta}_h} \sum_i
&\big\{\lambda_i\ell(f(g(\tilde{\x}_i; \bm{\theta}_g); \bm{\theta}_f), \tilde{y}_i) \nonumber\\
&+ \gamma\ell(h(g(\tilde{\x}_i; \bm{\theta}_g); \bm{\theta}_h), \tilde{z}_i) \big\}
\end{align}

We depict the training pipeline in Algorithm~\ref{algo:alg}. Aside from the simple sample interpolation strategy, the network can be optimized with the standard back-propagation approaches. Therefore, our augmented distribution alignment strategy can be easily incorporated into any existing neural network by additionally introducing a domain discriminator (implemented with the GRL layer), and using the proposed cross-set sample augmentation strategy during mini-batch data preparation. 

\begin{algorithm}[t]
\SetAlgoLined
\SetKwInOut{Input}{Input}
\SetKwInOut{Output}{Output}
\caption{The training process of our ADA-Net.}
\label{algo:alg}
\Input{A batch of labeled samples \{$(\x^l, y^l)\}$, a batch of unlabeled samples $\{\x^u\}$}
\begin{enumerate}
  \item Perform one forward step to produce the pseudo-labels for the unlabeled samples, \ie, $\hat y^u \xleftarrow{} f(g(\x^u))$
  \item Sample $\lambda$ from a prior $\beta$ distribution for each pair of labeled/unlabled samples in the batch, and generate a batch of pseudo-labeled samples $\{(\tilde \x, \tilde y, \tilde z)\}$ using \\(\ref{eqn:interp_x}), (\ref{eqn:interp_y}), (\ref{eqn:interp_z}).
  \item Train the model by minimizing the objective function \\ in \eqref{eq:ada_obj_final}.
\end{enumerate}
\Output{The feature extractor $g$, the class predictor $f$, and the domain discriminator $h$.}
\end{algorithm}

\subsection{Our ADA-Net in Combination with Two SSL Methods}\label{sec:sleda_ext}
In Section~\ref{sec:summary}, we have proposed the ADA-Net method, which incorporates our newly proposed strategies into the vanilla neural network to address the SSL problem by considering the empirical distribution alignment issue. The empirical distribution alignment strategy can also be used to further improve the existing SSL methods, as these SSL methods rarely considered the empirical distribution mismatch issue between the labeled and unlabeled samples. In particular, our adversarial distribution alignment and cross-set sample augmentation strategies in ADA-Net can be easily incorporated into existing semi-supervised methods to further boost their performance. More specifically, we present the general objective function of our extended ADA-Net method by additionally introducing a new loss function $\mathcal L^u_i$ for the $i$-th unlabeled sample into \eqref{eq:ada_obj_final}, \ie,
\begin{align}\label{eq:ada_ext}
\min_{\bm{\theta}_f, \bm{\theta}_g, \bm{\theta}_h} \sum_i 
&\big\{\lambda_i\ell(f(g(\tilde{\x}_i; \bm{\theta}_g); \bm{\theta}_f), \tilde{y}_i) \nonumber\\
&+ \gamma\ell(h(g(\tilde{\x}_i; \bm{\theta}_g); \bm{\theta}_h), \tilde{z}_i) + \mathcal L^u_i\big\}
\end{align}

Below we take two SSL methods (\ie VAT+Ent~\cite{miyato2018virtual} and ICT~\cite{verma2019interpolation}) as the examples to discuss how the existing SSL methods can be incorporated into our SLEDA framework, in which we will focus on the definition of $\mathcal L^u_i$.

\subsubsection{Our ADA-Net (VAT+Ent)}
The VAT+Ent~\cite{miyato2018virtual} method utilizes the unlabeled samples by enforcing consistency between the class prediction score of each original unlabeled sample and that of this unlabeled sample after adversarial perturbation (see~\cite{miyato2018virtual} for more details). Specifically, following the VAT+Ent~\cite{miyato2018virtual} method, we minimize the following objective loss function for the $i$-th unlabeled sample,
\begin{align}\label{eq:ada_vat_obj}
\!\!\!\!\!\!\!\!\mathcal L^u_i\!=\!\lambda_v D(q^u_i, f(g(\bar{\x}^u_i; \bm{\theta}_g); \bm{\theta}_f))\!+\!\lambda_e H(f(g(\x^u_i; \bm{\theta}_g); \bm{\theta}_f))\!\!\!\!\!
\end{align}
where $D(\cdot, \cdot)$ is the consistency criterion implemented with the KL-divergence function in~\cite{miyato2018virtual}, $q^u_i$ is the class prediction score of the $i$-th unlabeled sample produced by using the latest feature extractor and class predictor from the last training iteration, $\bar{\x}^u_i$ is the same unlabeled sample after adversarial perturbation (see~\cite{miyato2018virtual}), $H(\cdot)$ is the conditional entropy loss function which enforces the class prediction score of $\x^u_i$ to become close to a one-hot vector, and $\lambda_v$ and $\lambda_e$ are the trade-off parameters. The training process of our ADA-Net (ICT) is almost the same as Algorithm~\ref{algo:alg}, except that in Step (3) we train the model by minimizing the objective loss function in \eqref{eq:ada_ext}, in which the loss $\cL_u^i$ is defined in \eqref{eq:ada_vat_obj}.

\subsubsection{Our ADA-Net (ICT)}
The ICT~\cite{verma2019interpolation} method introduces a consistency criterion on the unlabeled samples, which is based on the sample Mixup strategy~\cite{zhang2017mixup}. Specifically, they used a mean teacher model, where the mixture of the output from any two unlabeled samples by the teacher model is enforced to be consistent with the output of the mixed sample by the student model. However, the empirical distribution mismatch between labeled and unlabeled samples is not discussed in ICT~\cite{verma2019interpolation}. 

In our ADA-Net method in combination with ICT~\cite{verma2019interpolation}, we introduce the distribution alignment module to improve the results of ICT. Specifically, we introduce a domain discriminator for the student model. The same as in ICT, the teacher model is a moving average of the student model. We use $g$, $f$, and $h$ to denote the feature extractor, the class predictor, and the domain discriminator in the student model, respectively. Then we adopt the consistency loss function for the $i$-th unlabeled sample in ICT~\cite{verma2019interpolation} as the loss function $\mathcal L^u_i$ in \eqref{eq:ada_ext}, \ie,
\begin{align}\label{eq:ada_ict_obj}
    \mathcal L^u_i = w_{it} \ell_{MSE} (f(g(\tilde \x^u_i; \bm{\theta}_g); \bm{\theta}_f), \tilde y^u_i)
\end{align}
where $\ell_{MSE}(\cdot,\cdot)$ is the Mean Square Error function, $w_{it}$ is the trade-off parameter that gradually increases after each training iteration as suggested in \cite{verma2019interpolation}, and $(\tilde{\x}^u_i, \tilde y^u_i)$ is the augmented sample produced by using the sample interpolation strategy on the unlabeled samples (\ie, the within-set sample interpolation strategy). When computing $(\tilde{\x}^u_i, \tilde y^u_i)$, we need to first compute the pseudo-label $\hat y^u_i$ of each $\x^u_i$, then we use the sample interpolation strategy for each pair of unlabeled samples $(\x^u_i, \hat y^u_i)$ and $(\x^u_j, \hat y^u_j)$ in the same way as \eqref{eqn:interp_x}~\eqref{eqn:interp_y}. When generating $(\tilde{\x}_i, \tilde y_i)$ with the cross-set sample interpolation strategy discussed in Section~\ref{sec:sample_augmentation} and generating $(\tilde{\x}^u_i, \tilde y^u_i)$ with the aforementioned within-set sample interpolation strategy, we use the teacher model to produce the pseudo-label $\hat y^u_i$ for each $\x^u_i$. The training process of our ADA-Net (ICT) is almost the same as Algorithm~\ref{algo:alg}, except that in Step (3) we train the model by minimizing the objective loss function in \eqref{eq:ada_ext}, in which the loss $\cL_u^i$ is defined in \eqref{eq:ada_ict_obj}.

\section{Experiments}
In this section, we first conduct experiments on the benchmark image datasets SVHN and CIFAR10 to evaluate our proposed ADA-Net for the semi-supervised image recognition task. Then we evaluate our ADA-Net for the semi-supervised point cloud recognition task on the benchmark point cloud datasets ModelNet40 and ShapeNet55.

\subsection{Semi-Supervised 2D Image Recognition}
\subsubsection{Datasets}
\paragraf{SVHN:} The Street View House Numbers (SVHN) dataset~\cite{netzer2011reading} is a dataset of real-world digital photos. It consists of 10 classes and 73,257 training images with the resolution of 32$\times$32. Following~\cite{miyato2018virtual}, out of the full training set, 1000 images are used together with their labels for supervised learning. The rest of the training images are used as the unlabeled samples. Random translation is the only augmentation strategy used for this dataset.

\paragraf{CIFAR10:} The CIFAR10 dataset~\cite{krizhevsky2009learning} has 10 classes, and consists of 50,000 training images as well as 10,000 testing images. All images have the resolution of 32$\times$32. 4,000 samples from the training images are used as the labeled set in our experiments, and the other training images are used as the unlabeled samples.

\subsubsection{Implementation Details}
\renewcommand{\thefootnote}{\arabic{footnote}}
We follow \cite{tarvainen2017mean} to use the Conv-Large network as the the backbone network, and implement our ADA-Net in Tensorflow~\cite{abadi2016tensorflow}. For the class predictor, a single fully-connected layer is used to map the features to logits. For the domain discriminator, two fully-connected layers, each with 1,024 output units, together with another fully-connected layer are used to produce two channels of soft domain labels.

The batch size is 128. The learning rate is initialized as 0.001, and then starts to linearly decay at a predefined number of training epochs, where one epoch is defined as one iteration over all unlabeled data. On CIFAR10, the network is trained for 2000 epochs and the learning rate decays after 1500 epochs. On SVHN, the network is trained for 500 epochs and the learning rate decays after 460 epochs. We use the ADAM optimizer with the momentum set as 0.9. The interpolation parameter $\alpha$ is set as 1.0 and 0.1 on CIFAR10 and SVHN, respectively, which are the same values as used in our preliminary conference work~\cite{wang2019adanet} when employing PreAct-ResNet-18~\cite{he2016identity} as the backbone network\footnotemark. The experiments on CIFAR10 and SVHN share the exactly same network and training protocol.
\footnotetext{In our preliminary conference work~\cite{wang2019adanet}, we use PreAct-ResNet-18 as the backbone network, and our method achieves the error rates of 8.87\% and 5.90\% on CIFAR10 and SVHN, respectively. Here we use Conv-Large as another backbone network to further demonstrate the effectiveness of our ADA-Net.}

\subsubsection{Experimental Results}
We summarize the classification error rates on the SVHN and CIFAR10 datasets in Table~\ref{tab:ssl-ablation}. We also include the baseline method Conv-Large~\cite{tarvainen2017mean} that is trained based on the labeled data only. To validate the effectiveness of the two strategies in our ADA-Net, we report the results from two variants of our proposed approach. In the first variant, we do not use the cross-set sample augmentation strategy and only perform the distribution alignment by using the original labeled and unlabeled samples. In the second variant, we remove the discriminator and only perform cross-set sample augmentation for learning the classifier.

\begin{figure*}
	\centering
	\includegraphics[width=0.8\linewidth]{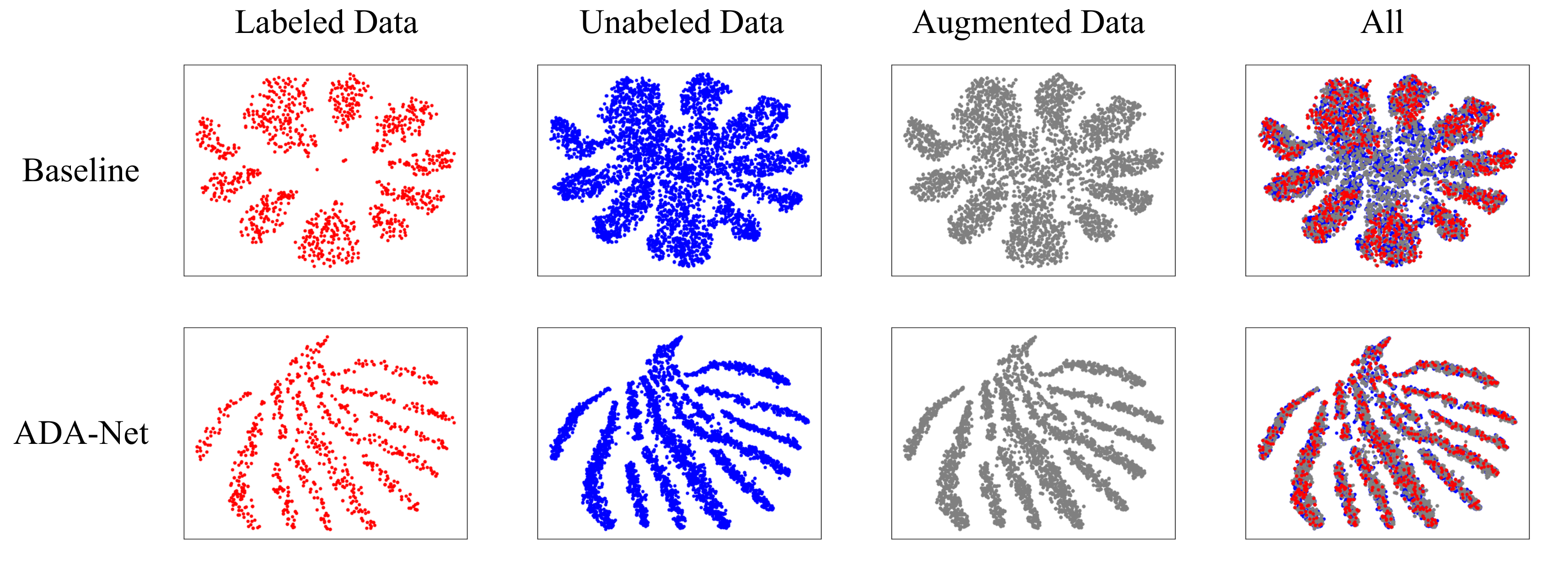}
	\vspace{-5pt}
	\caption{Visualization of the features extracted by the baseline Conv-Large method and our ADA-Net on SVHN, in which we use the t-SNE method~\cite{maaten2008tsne} to visualize the features from the labeled samples, unlabeled samples, interpolated samples, and all samples, respectively. For the baseline Conv-Large method, empirical distribution mismatch between the labeled and unlabeled samples can be observed, and the augmented samples bridge the distribution gap to some extent. For our ADA-Net, with the augmented distribution alignment strategy, empirical data distribution mismatch is well reduced.}
	\label{fig:tsne}
\end{figure*}

As shown in Table~\ref{tab:ssl-ablation}, our ADA-Net significantly improves the classification performance of the supervised baseline method Conv-Large on both datasets. We also observe that both distribution alignment and cross-set sample augmentation strategies are important for improving the performance on CIFAR10 and SVHN. When using Conv-Large as the backbone network, the distribution alignment strategy on the labeled and unlabeled datasets brings 4.91\% and 4.82\% improvement, respectively, and the cross-set sample augmentation strategy achieves 4.93\% and 0.98\% improvement, respectively. By integrating both strategies, the classification error rate can be reduced from 20.65\% and 10.82\% (the supervised baseline method) to 10.30\% and 4.62\% (our method) on the CIFAR10 and SVHN datasets, respectively.
The experimental results clearly demonstrate the effectiveness of the two newly proposed strategies in our ADA-Net method.

\begin{table}[h!]
	\begin{center}
		\caption{Classification error rates (\%) of our proposed ADA-Net and its variants as well as the baseline method Conv-Large~\cite{tarvainen2017mean} on the CIFAR10 and SVHN datasets. ``DAS" denotes the distribution alignment strategy, and ``SAS" denotes the cross-set sample augmentation strategy. Conv-Large is used as the backbone network in all methods. The best results are highlighted in boldface.}
		\label{tab:ssl-ablation}
		\begin{tabular}{c|c c|c|c} 
		\hlineB{3}
			                    & {DAS} & {SAS} & {CIFAR10} & {SVHN}\\
			\hline \hline 
			Supervised baseline & &             &20.65   &10.82\\
			\hline
			\multirow{3}{*}{Ours} & \checkmark& &15.74   &6.00\\
			\cline{2-5}
			                    & & \checkmark  &15.72   &9.84\\
			\cline{2-5}
			     &\checkmark &\checkmark        &\textbf{10.30}&\textbf{4.62}\\
			\hline
		\end{tabular}
	\end{center}
\end{table}

\subsubsection{Experimental Analysis} \label{sec:ea}
\paragraf{Feature visualization:} To better understand how our ADA-Net works, we use the convolutional block of the Conv-Large network \cite{tarvainen2017mean} as the feature extractor, and adopt the t-SNE method~\cite{maaten2008tsne} to visualize the labeled samples, the unlabeled samples, and the generated pseudo-labeled samples on the SVHN dataset in Fig.~\ref{fig:tsne}. For comparison, we also visualize the features extracted by using the baseline Conv-Large backbone trained with only labeled data. As shown in Fig.~\ref{fig:tsne}, for the baseline Conv-Large method, we observe a considerable distribution difference between the labeled and unlabeled samples, and between the labeled samples and the pseudo-labeled samples. Nevertheless, with our ADA-Net, the distributions of the samples using these three types of features are similar since we explicitly align the distributions of labeled and unlabeled samples in the training process.

\paragraf{Feature distribution:} To further demonstrate the effectiveness of our ADA-Net for reducing the data distribution mismatch, we take the first three activations of the features extracted by the baseline Conv-Large method and our ADA-Net as the examples, and plot the distributions of labeled and unlabeled samples for each dimension individually. The distribution is produced by performing kernel density estimation~\cite{parzen1962estimation, rosenblatt1956remarks} based on the samples along each individual dimension of each type of feature. As shown in Figure~\ref{fig:damaind}, we observe a considerable data distribution mismatch between the estimated empirical distributions of labeled and unlabeled samples for the baseline Conv-Large method, while the distribution mismatch is well reduced by our ADA-Net method.

\begin{figure}[h]
    \vspace{-5pt}	
	\centering
	\includegraphics[width=0.8\linewidth]{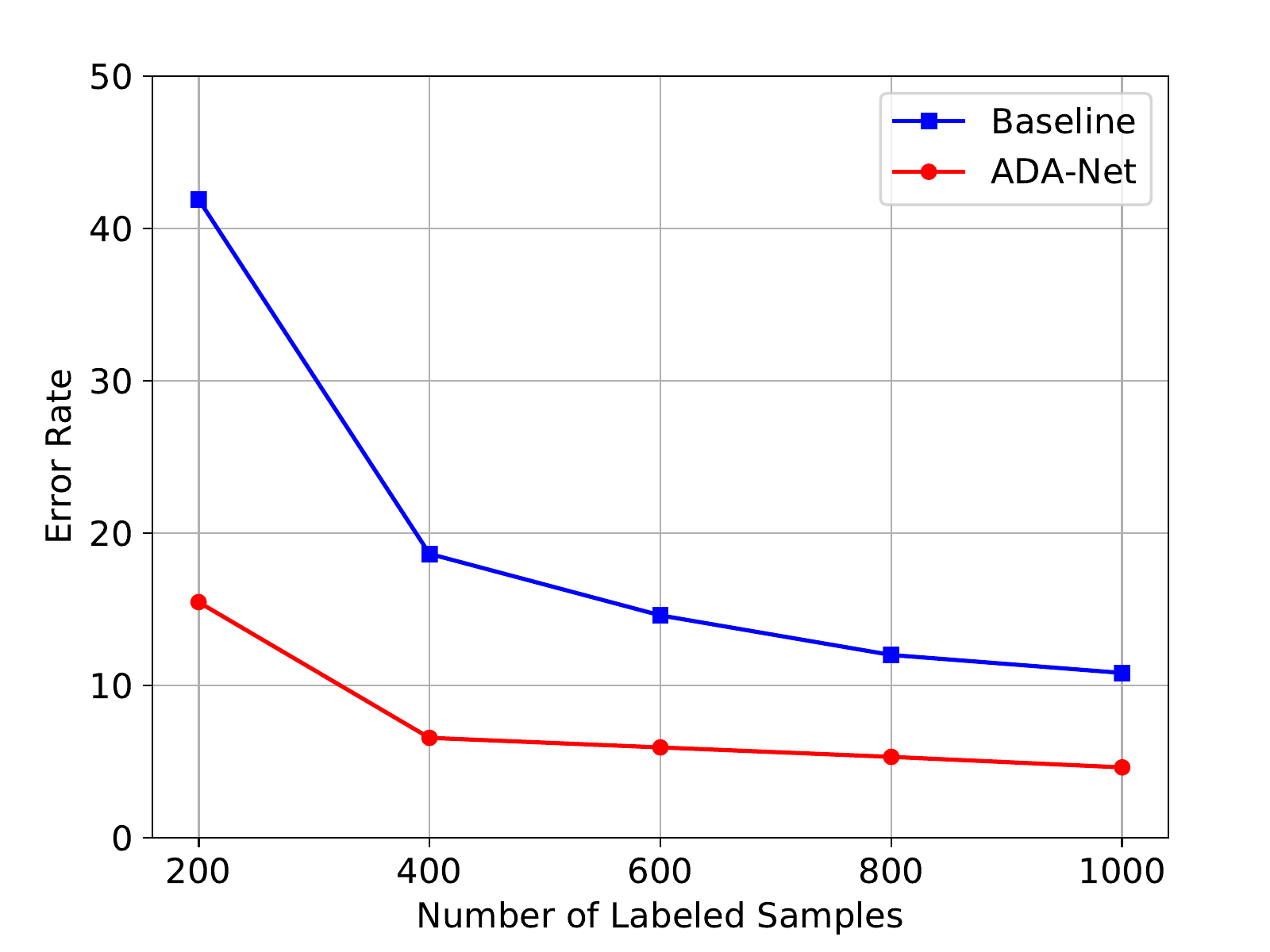}
	\caption{Classification error rates (\%) of our ADA-Net and the baseline Conv-Large method on SVHN when varying the number of labeled samples.}
	\vspace{-0cm}
	\label{fig:size}
\end{figure}
\begin{figure}[t]
	\centering
	\includegraphics[width=\linewidth]{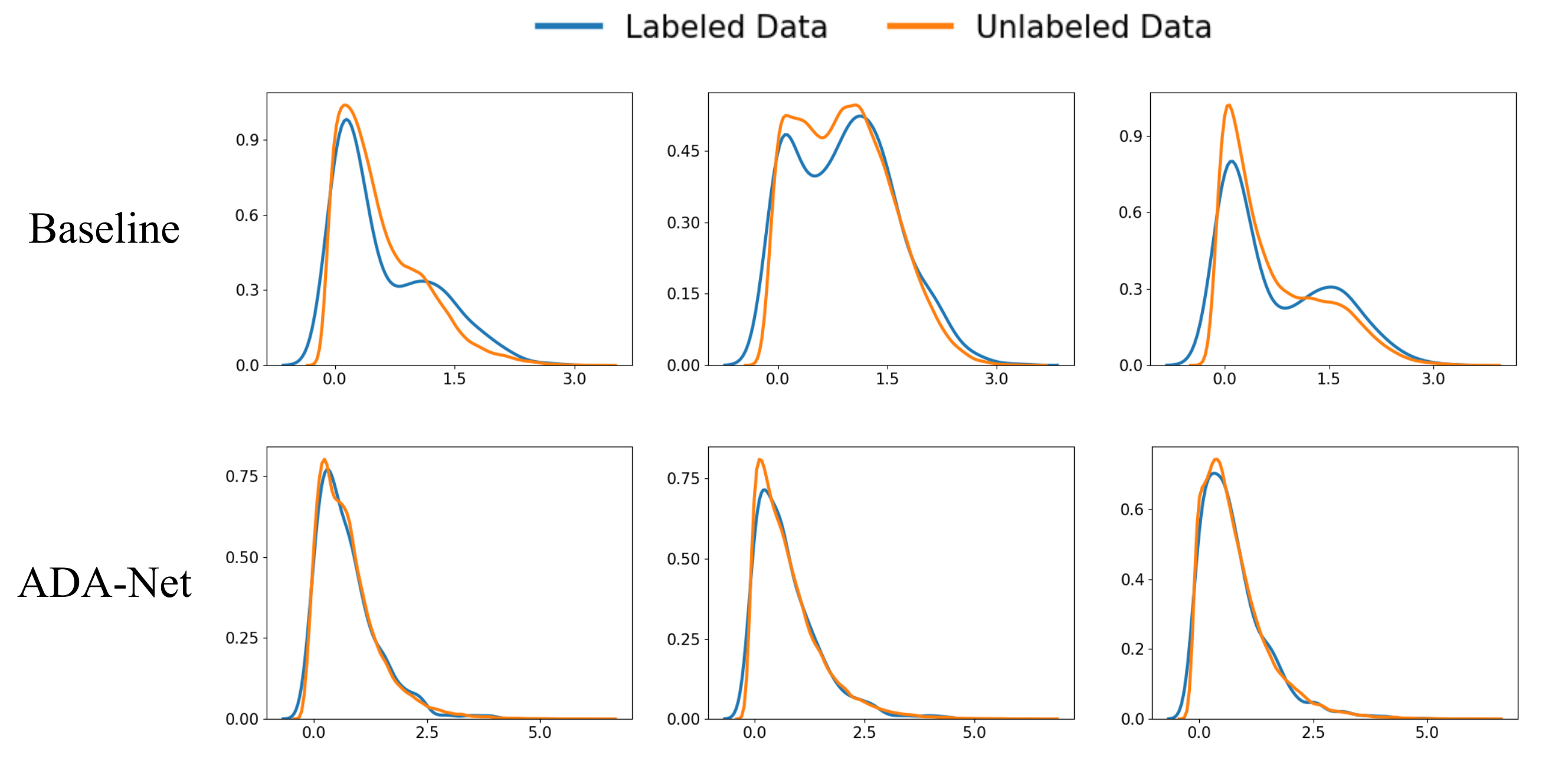}
	\caption{Kernel density estimation results of the baseline Conv-Large method and our ADA-Net for the labeled and unlabeled samples from the SVHN dataset based on the first three feature activations. Considerable data distribution mismatch between labeled and unlabeled data can be observed for the baseline Conv-Large method (top row), while two distributions are generally well aligned after using our ADA-Net (bottom row).}
	\label{fig:damaind}
\vspace{-5pt}
\end{figure}

\paragraf{Results when varying the number of labeled samples:} As discussed in Section~\ref{sec:empd}, in SSL, we often suffer from the data distribution mismatch issue, and the issue becomes more serious when there are fewer labeled samples. To further validate the effectiveness of our ADA-Net, we take the SVHN dataset as an example to conduct the experiments by varying the number of labeled samples. In particular, we train the models by using $200$, $400$, $600$, $800$ and $1,000$ labeled samples, and all other experimental settings remain the same. The error rates of our ADA-Net and the baseline Conv-Large method are plotted in Figure~\ref{fig:size}. We observe that the error rate of the baseline Conv-Large method increases dramatically when reducing the number of labeled samples, which indicates that the SSL problem becomes more challenging due to the sampling bias issue. Nevertheless, our ADA-Net consistently improves the classification performance by reducing such sampling bias with the augmented distribution alignment strategy. The relative improvement is more obvious when there are fewer labeled samples.

\subsubsection{Comparison with the State-of-the-art Methods}
\begin{table}[h!]
	\begin{center}
	    \caption{Classification error rates (\%) of different methods on CIFAR10 and SVNH. The results of the baseline methods are directly quoted from their works. The best results are highlighted in boldface.}
		\label{tab:ssl}
		
\begin{threeparttable}
	\begin{tabular}{c|c|c} 
		\hlineB{3}
		\textbf{Methods} & \textbf{CIFAR10} & \textbf{SVHN} \\
		\hline
		$\Pi$ Model~\cite{laine2017temporal}        & 12.36 & 4.82\\
		Temporal ensembling~\cite{laine2017temporal}& 12.16 & 4.42\\
		VAT~\cite{miyato2018virtual}                & 11.36 & 5.42\\
        VAT+Ent~\cite{miyato2018virtual}            & 10.55 & 3.86\\			
		SaaS~\cite{cicek2018saas}                   & 13.22 & 4.77\\
		MA-DNN~\cite{Chen_2018_ECCV}                & 11.91 & 4.21\\
        VAT+Ent+SNTG~\cite{luo2018smooth}           & 9.89  & 3.83\\
        Mean Teacher+fastSWA~\cite{athiwaratkun2018there}& 9.05 & -\\
        ICT~\cite{verma2019interpolation}           &7.29   &3.89\\
        UPS~\cite{rizve2021in}                      &6.36   &-\\
        \hline
    	ADA-Net (Conv-Large)                              &10.30  &4.62\\
		ADA-Net (VAT+Ent)                    &10.09  &3.54\\
		ADA-Net (ICT)                        &\textbf{6.04}   &\textbf{3.49}\\
		\hline 
	\end{tabular}
\end{threeparttable}
    \end{center}
    
\end{table}

We further compare our ADA-Net with the recent state-of-the-art SSL learning approaches~\cite{laine2017temporal,tarvainen2017mean,miyato2018virtual,miyato2018virtual,cicek2018saas,Chen_2018_ECCV,luo2018smooth, athiwaratkun2018there,verma2019interpolation,rizve2021in}. As discussed in \cite{odena2018realistic}, minor modifications of the network structure and the data processing method often lead to different results. For fair comparison, we take the VAT method~\cite{miyato2018virtual} as the reference approach, and strictly follow its experimental setup. In particular, we implement our ADA-Net based on the released code from~\cite{miyato2018virtual}. The same Conv-Large architecture and hyper-parameters are used. For better presentation, we refer to our ADA-Net based on the supervised Conv-Large method as \emph{ADA-Net (Conv-Large)}.
As discussed in Section~\ref{sec:sleda_ext}, we address the SSL problem from the novel perspective of empirical distribution alignment, so our augmented distribution alignment strategy is generally complementary to other SSL methods. Therefore, we also report the results of our ADA-Net (VAT+Ent) and ADA-Net (ICT), by additionally incorporating the two strategies in our ADA-Net into VAT+Ent~\cite{miyato2018virtual} and ICT~\cite{verma2019interpolation}, respectively.

In Table~\ref{tab:ssl}, we report the results of different methods on the CIFAR10 and SVHN datasets. Our ADA-Net method achieves the competitive results when compared with the state-of-the-art SSL methods. Specifically, our ADA-Net (Conv-Large) approach achieves the error rates of $10.30\%$ and $4.62\%$ on CIFAR10 and SVHN, respectively. In addition, our ADA-Net (VAT+Ent) reduces the classification error rate of VAT+Ent from 10.55\% to 10.09\% (\resp from 3.86\% to 3.54\%) on the CIFAR10 (\resp SVHN) dataset. Similarly, our ADA-Net (ICT) reduces the classification error rate of ICT from 7.29\% to 6.04\% (\resp from 3.89\% to 3.49\%) on the CIFAR10 (\resp SVHN) dataset. It is worth mentioning that our ADA-Net (ICT) achieves the new state-of-the-art error rates of 6.04\% and 3.49\% by using the Conv-Large network as the backbone on CIFAR10 and SVHN, respectively. These results clearly demonstrate the effectiveness of our SLEDA framework.

\subsection{Semi-Supervised 3D Point Cloud Recognition}
We further evaluate our proposed approach for the semi-supervised 3D point cloud recognition task.

Point cloud recognition~\cite{wu20153dshapenets,qi2017pointnet,qi2017pointnet2,riegler2017octnet,wang2019dgcnn,rao2020global} has become a popular research topic in recent years, largely due to the rapid adoption of 3D scanning devices in various applications (\eg augmented reality and autonomous driving). Yet, it is more expensive to annotate 3D point clouds than 2D images, since the annotator needs to rotate a 3D point cloud object several times before assigning the label. Therefore, the semi-supervised learning methods provide an effective way to utilize unlabeled point cloud data to improve point cloud recognition performance. Unfortunately, the existing semi-supervised learning methods are not specifically designed for the 3D point cloud recognition task.

In this section, under the SSL setting, we first set up a benchmark for 3D point cloud recognition based on the popular benchmark point cloud datasets ModelNet40 and ShapeNet55. Then we evaluate the state-of-the-art semi-supervised learning methods that were originally designed for image recognition, as well as our ADA-Net, for the point cloud recognition task.

\subsubsection{Datasets}
\paragraf{ModelNet40:} The ModelNet40 dataset \cite{wu20153dshapenets} contains the CAD models from 40 classes. It is split into two subsets, a training set with 9,843 samples and a testing set with 2,468 samples. We follow the previous works \cite{qi2017pointnet,qi2017pointnet2,wang2019dgcnn} to generate the point clouds by sampling points evenly from  the CAD models. In particular, $1,024$ points are sampled for each CAD model in our experiments. For semi-supervised learning, 500 training samples with their labels are used as the labeled training data, while the other training samples are used as unlabeled training data. 

\paragraf{ShapeNet55:} The ShapeNet55 dataset \cite{chang2015shapenet} contains the CAD models from 55 classes. We use the newest version of this dataset (\ie ShapeNetCore.v2), which consists of 35,708 samples in the training set, 5,158 samples in the validation set, and 10,261 samples in the testing set. In our experiments, we combine the training and validation sets as our training set. Similar to the ModelNet40 dataset, we generate the point clouds by sampling $1,024$ points evenly from the CAD models. In our training set, 400 samples are used as labeled data, the rest of the samples are used as unlabeled data.

On both datasets, we perform random sampling to select labeled samples for 10 rounds, and report the average results as well as their standard deviations.

\subsubsection{Implementation Details}
For the semi-supervised point could classification task, we implement our ADA-Net in a similar way as the semi-supervised image classification task. We use PointNet \cite{qi2017pointnet} as the backbone network to process the input point clouds. For each fully-connected layer of the PointNet network, the number of channels is set as 1/4 of the original number of channels in order to reduce the model complexity (\eg, for the first fully-connected layer of our PointNet, we use 16 channels instead of the original 64 channels). Similarly as in the semi-supervised image classification task, a single fully-connected layer is used as the class predictor to map the features to logits, and two fully-connected layers are used as the domain discriminator. 

We set the batch size to 32. The learning rate is initialized as 0.002, and is divided by 10 when 2/3 of the total number of epochs are reached. We train the network for 300 epochs and 80 epochs on ModelNet40 and ShapeNet55, respectively. We use the Adam optimizer with the momentum set as 0.9. In our experiments, we use the following hyper-parameters, \(\text{weight-decay}=0.0001\), interpolation coefficient $\alpha=0.2$ and $0.5$ for ModelNet40 and ShapeNet55, respectively. The experiments on both datasets share the same network and training protocol.

\subsubsection{Experimental Results}
In Table~\ref{tab:ssl-ablation-3d}, the classification error rates of our method ADA-Net and its two variants on the ModelNet40 and ShapeNet55 datasets are reported, in which PointNet~\cite{qi2017pointnet} is used as the backbone network in all methods. On the ModelNet40 and ShapeNet55 datasets, the distribution alignment strategy on the labeled and unlabeled datasets reduces the classification error rates by 1.48\% and 2.05\%, respectively, while the cross-set sample augmentation strategy reduces the classification error rates by 0.51\% and 0.95\%, respectively. By integrating both strategies, our ADA-Net reduces the classification error rate from 24.75\% and 29.39\% (the supervised baseline method PointNet~\cite{qi2017pointnet}) to 20.68\% and 25.92\% (ADA-Net) on the ModelNet40 and ShapeNet55 datasets, respectively. The experimental results clearly demonstrate the effectiveness of our ADA-Net method and the contributions of the two newly proposed strategies for semi-supervised 3D point cloud recognition.

\begin{table}[h!]
	\begin{center}
		\caption{The classification error rates (\%) of our proposed ADA-Net and its variants as well as the supervised baseline method (\ie PointNet~\cite{qi2017pointnet}) on the ModelNet40 and ShapeNet55 datasets. For each method, the error rates are averaged over 10 labeled/unlabeled dataset partitions. ``DAS" denotes the distribution alignment strategy, and ``SAS" denotes the cross-set sample augmentation strategy. PointNet~\cite{qi2017pointnet} is used as the backbone network in all methods. The best average classification results are highlighted in boldface.}
		\label{tab:ssl-ablation-3d}
		\begin{tabular}{c|c c|c|c} 
		\hlineB{3}
			 & {DAS} & {SAS} & {ModelNet40} & {ShapeNet55}\\
			\hline \hline 
			Supervised baseline & &                         &24.75 ($\pm$0.79) &29.39 ($\pm$1.06)\\
			\hline
			\multirow{3}{*}{Ours}   &\checkmark &           &23.27 ($\pm$0.89) &27.89 ($\pm$0.76)\\
			\cline{2-5}
			                        &           &\checkmark &24.24 ($\pm$0.82) &28.44 ($\pm$0.93)\\
    		\cline{2-5}
			                        &\checkmark &\checkmark &\textbf{20.68 ($\pm$0.63)}   &\textbf{25.92 ($\pm$0.91)}\\
			\hline
		\end{tabular}
	\end{center}
\end{table}

\subsubsection{Comparison with Other Methods}
We further compare our ADA-Net with other methods \cite{laine2017temporal,miyato2018virtual,tarvainen2017mean,cicek2018saas,Chen_2018_ECCV,luo2018smooth,athiwaratkun2018there,verma2019interpolation}, which are the state-of-the-art methods for semi-supervised image recognition. Since these methods were designed for the 2D image recognition task and only the image recognition results are reported, we implement all of them for the point cloud recognition task. In our implementation, we utilize the officially released code from all these works. For the backbone network, we use PointNet \cite{qi2017pointnet} with 1/4 of the original number of channels at each layer for each method. We refer to our ADA-Net using the supervised PointNet method as \emph{ADA-Net (PointNet)}.

\begin{table}[h!]
	\begin{center}
		\caption{The classification error rates (\%) of different methods on ModelNet40 and ShapeNet55. For each method, the error rates are averaged over 10 labeled/unlabeled dataset partitions. PointNet~\cite{qi2017pointnet} is used as the backbone network in all methods. The results of all methods are produced based on our implementation. The best average classification results are highlighted in boldface.}
		\label{tab:ssl-3d}
		
\begin{threeparttable}
	\begin{tabular}{c|c|c} 
		    \hlineB{3}
			\textbf{Methods} & \textbf{ModelNet40} & \textbf{ShapeNet55}\\
		    \hline
			$\Pi$ Model~\cite{laine2017temporal}                &22.88 ($\pm$0.78) &27.23 ($\pm$1.23)\\
			Temporal ensembling~\cite{laine2017temporal}        &21.70 ($\pm$0.76) &27.10 ($\pm$0.86)\\
			Mean Teacher~\cite{tarvainen2017mean}               &22.03 ($\pm$0.63) &26.96 ($\pm$0.69)\\
			SaaS~\cite{cicek2018saas}                           &23.03 ($\pm$0.73) &26.63 ($\pm$0.65)\\
			MA-DNN~\cite{Chen_2018_ECCV}                        &22.94 ($\pm$1.15) &28.07 ($\pm$0.97)\\
			VAT~\cite{miyato2018virtual}                        &22.95 ($\pm$1.02) &27.89 ($\pm$1.06)\\
			VAT+Ent~\cite{miyato2018virtual}                    &22.31 ($\pm$1.01) &27.00 ($\pm$1.06)\\
			VAT+Ent+SNTG~\cite{luo2018smooth}                   &22.37 ($\pm$1.10) &26.84 ($\pm$1.04)\\
			Mean Teacher+fastSWA~\cite{athiwaratkun2018there}   &21.75 ($\pm$1.11) &26.57 ($\pm$0.63)\\
			ICT~\cite{verma2019interpolation}                   &21.47 ($\pm$0.95) &26.43 ($\pm$1.06)\\
    		\hline
    		ADA-Net (PointNet)                              &20.68 ($\pm$0.63) &25.92 ($\pm$0.91)\\
    		ADA-Net (VAT+Ent)                               &20.17 ($\pm$0.76) &25.59 ($\pm$0.98)\\
    		ADA-Net (ICT)                                   &\textbf{20.12 ($\pm$0.78)} &\textbf{25.49 ($\pm$0.93)}\\
			\hline
	\end{tabular}
\end{threeparttable}
    \end{center}
    
\end{table}

In Table \ref{tab:ssl-3d}, we compare the results of our ADA-Net with the baseline methods on ModelNet40 and ShapeNet55. Our ADA-Net achieves the error rates of 20.68\% and 25.92\% on ModelNet40 and ShapeNet55, respectively., which are better than the baseline methods. Moreover, we also demonstrate that for the semi-supervised point cloud recognition task, our ADA-Net method is also complimentary to the existing SSL methods. As shown in Table \ref{tab:ssl-3d}, with our ADA-Net (ICT), we achieve the error rates of 20.12\% and 25.49\% on ModelNet40 and ShapeNet55, respectively, which outperform the state-of-the-art methods for the semi-supervised point cloud recognition task.

To further validate the superiority of our ADA-Net over the existing semi-supervised learning methods, we conduct a significance test based on the experimental results. Specifically, we respectively perform the significance test to compare our ADA-Net (PointNet), ADA-Net (VAT+Ent) and ADA-Net (ICT) with each baseline method based on the results from 10 rounds of experiments with different random labeled/unlabeled dataset partitions. The results demonstrate that on both ModelNet40 and ShapeNet55 datasets, our ADA-Net (PointNet), ADA-Net (VAT+Ent) and ADA-Net (ICT) are significantly better than all the state-of-the-art methods as well as the supervised learning baseline method, as judged by t-test with a significance level of 0.03.

\section{Conclusions}
In this work, we have proposed a new deep semi-supervised learning framework called SLEDA. In particular, we tackle the semi-supervised learning problem from a new perspective, where labeled and unlabeled data often exhibit a considerable empirical distribution mismatch. We prove that the generalization error of semi-supervised learning methods can be bounded by minimizing the training error on labeled data and the empirical distribution distance between labeled and unlabeled data. Based on our framework, we have proposed a new semi-supervised learning method called Augmented Distribution Alignment Network (ADA-Net) by jointly using an adversarial training strategy and a cross-set sample interpolation strategy. These two strategies can be readily incorporated into deep neural networks to boost the existing supervised and semi-supervised learning methods. Comprehensive experiments on two benchmark image datasets CIFAR10 and SVHN and another two benchmark point cloud datasets ModelNet40 and ShapeNet55 have validated the effectiveness of our framework.

\bibliographystyle{IEEEtran}
\bibliography{egbib}

\vspace{-30pt}
\begin{IEEEbiography}[{\includegraphics[width=1in,height=1.25in,clip,keepaspectratio]{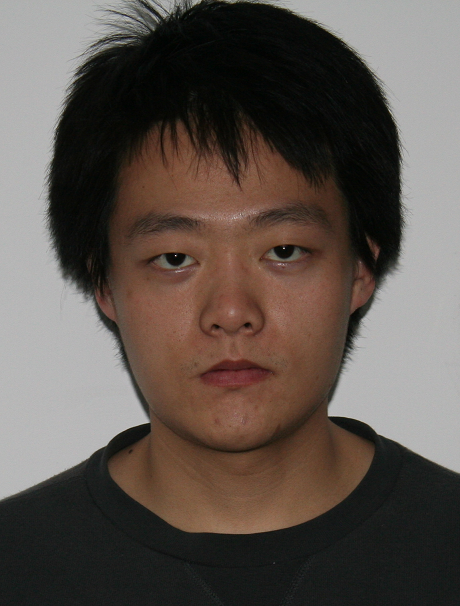}}]{Feiyu Wang}
received the M.Sc. degree in Computer and Systems Engineering from Rensselaer Polytechnic Institute, Troy, NY, USA, in 2017. He is currently a doctoral student at the School of Electrical and Information Engineering, University of Sydney. His current research interests include point cloud recognition, domain adaptation, and semi-supervised learning.
\end{IEEEbiography}

\vspace{-30pt}
\begin{IEEEbiography}[{\includegraphics[width=1in,height=1.25in,clip,keepaspectratio]{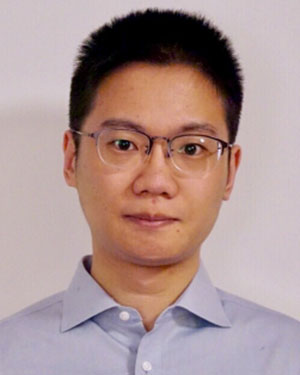}}]{Qin Wang}
received the bachelor‘s degree in electrical engineering from the Department of Electrical Engineering, Tsinghua University, Beijing, China, in 2015 and the master’s degree in energy science and technology in 2018 from the Department of Information Technology and Electrical Engineering, ETH Zürich, Zürich, Switzerland, where he is currently working toward the Ph.D. degree in the field of domain adaptation. His research interests include weakly supervised learning and domain adaptation.
\end{IEEEbiography}

\vspace{-30pt}
\begin{IEEEbiography}[{\includegraphics[width=1in,height=1.25in,clip,keepaspectratio]{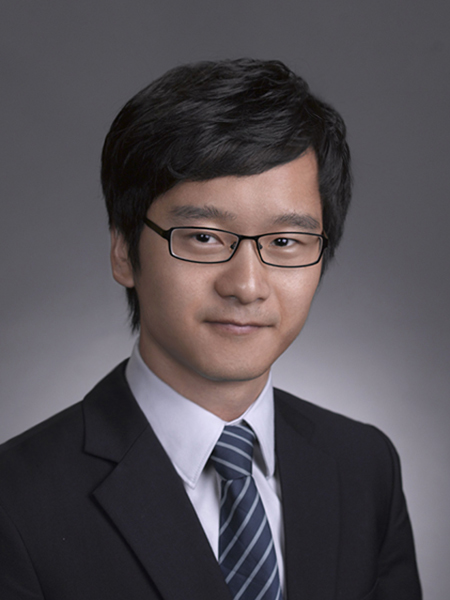}}]{Wen Li}
received the Ph.D. degree from Nanyang Technological University, Singapore, in 2015. From 2015 to 2019, he was a Post-Doctoral Researcher with the Computer Vision Laboratory, ETH Z\"urich, Switzerland. He is currently a Professor with the School of Computer Science and Engineering, University of Electronic Science and Technology of China. His main interests include transfer learning, multi-view learning, multiple kernel learning, and their applications in computer vision. 
\end{IEEEbiography}

\vspace{-30pt}
\begin{IEEEbiography}[{\includegraphics[width=1in,height=1.25in,clip,keepaspectratio]{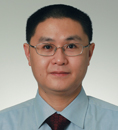}}]{Dong Xu}
received the BE and PhD degrees from University of Science and Technology of China, in 2001 and 2005, respectively. While pursuing the PhD degree, he was an intern with Microsoft Research Asia, Beijing, China, and a research assistant with the Chinese University of Hong Kong, Shatin, Hong Kong, for more than two years. He was a post-doctoral research scientist with Columbia University, New York, NY, for one year. He worked as a faculty member with Nanyang Technological University, Singapore. Currently, he is a professor and chair in Computer Engineering with the School of Electrical and Information Engineering, the University of Sydney, Australia. His current research interests include computer vision, statistical learning, and multimedia content analysis. He was the co-author of a paper that won the Best Student Paper award in the IEEE Conference on Computer Vision and Pattern Recognition (CVPR) in 2010, and a paper that won the Prize Paper award in IEEE Transactions on Multimedia (T-MM) in 2014. He is a fellow of the IEEE.
\end{IEEEbiography}

\vspace{-30pt}
\begin{IEEEbiography}[{\includegraphics[width=1in,height=1.25in,clip,keepaspectratio]{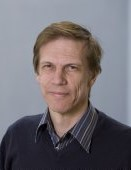}}]{Luc Van Gool}
received the degree in electromechanical engineering at the Katholieke Universiteit Leuven, in 1981. Currently, he is a professor at the Katholieke Universiteit Leuven in Belgium and the ETH in Z\"urich, Switzerland. He leads computer vision research at both places, where he also teaches computer vision. He has authored more than 200 papers in this field, and  received several Best Paper awards. He has been a program committee member of several major computer vision conferences. His main interests include 3D reconstruction and modeling, object recognition, tracking, and gesture analysis. He is a co-founder of 10 spin-off companies.
\end{IEEEbiography}
\hfill

\end{document}